\theoremstyle{definition}
\newtheorem{defn}{\protect\definitionname}
\theoremstyle{plain}
\newtheorem{lem}{\protect\lemmaname}
  \newenvironment{proof}[1][\proofname]{\par
    \normalfont\topsep6\p@\@plus6\p@\relax
    \trivlist
    \itemindent\parindent
    \item[\hskip\labelsep
          \scshape
      #1]\ignorespaces
  }{%
    \endtrivlist\@endpefalse
  }
  \providecommand{\proofname}{Proof}
\theoremstyle{plain}
\newtheorem{thm}{\protect\theoremname}
\newcommand{\indep}{\perp \!\!\! \perp}
\newcommand{\argmax}{\arg\,\max}
\author{
Bao Duong, Thin Nguyen
}
\providecommand{\definitionname}{Definition}
\providecommand{\lemmaname}{Lemma}
\providecommand{\theoremname}{Theorem}
\begin{document}
\title{Diffeomorphic Information Neural Estimation}
\maketitle
\begin{abstract}
Mutual Information (MI) and Conditional Mutual Information (CMI) are
multi-purpose tools from information theory that are able to naturally
measure the statistical dependencies between random variables, thus
they are usually of central interest in several statistical and machine
learning tasks, such as conditional independence testing and representation
learning. However, estimating CMI, or even MI, is infamously challenging
due the intractable formulation. In this study, we introduce \textbf{DINE}
(\textbf{D}iffeomorphic \textbf{I}nformation \textbf{N}eural \textbf{E}stimator)--a
novel approach for estimating CMI of continuous random variables,
inspired by the invariance of CMI over diffeomorphic maps. We show
that the variables of interest can be replaced with appropriate surrogates
that follow simpler distributions, allowing the CMI to be efficiently
evaluated via analytical solutions. Additionally, we demonstrate the
quality of the proposed estimator in comparison with state-of-the-arts
in three important tasks, including estimating MI, CMI, as well as
its application in conditional independence testing. The empirical
evaluations show that \textbf{DINE} consistently outperforms competitors
in all tasks and is able to adapt very well to complex and high-dimensional
relationships.

\end{abstract}

\section{Introduction}

\selectlanguage{english}%
Mutual Information (MI) and Conditional Mutual Information (CMI) are
pivotal dependence measures between random variables for general non-linear
relationships. In statistics and machine learning, they have been
employed in a broad variety of problems, such as conditional independence
testing \citep{runge2018conditional,mukherjee2020ccmi}, unsupervised
representation learning \citep{chen2016infogan}, search engine \citep{magerman1990parsing},
and feature selection \citep{peng2005feature}.

The MI of two random variables $X$ and $Y$ measures the expected
point-wise information, where the expectation is taken over the joint
distribution $P_{XY}$. Due to the expectation, estimating mutual
information for continuous variables remains notoriously difficult.
Even if one possesses the specification of the joint distribution,
i.e., a closed form of the density, which is most of the time unknown
in practice, the expectation may still be intractable. Consequently,
exact MI estimation is only possible for discrete random variables.
Historically, MI has been estimated by non-paramtric approachs \citep{kwak2002input,paninski2003estimation,kraskov2004estimating},
which are however not widely applicable due to their unfriendliness
with sample size or dimensionality. Recently, variational approaches
have been proposed to estimate the lower bound of MI \citep{belghazi2018mutual,oord2018representation}.
However, a critical limitation of MI lower bound estimators has been
studied by \citet{mcallester2020formal}, who show that any distribution-free
high-confidence lower bound estimation of mutual information is limited
above by $\mathrm{O}\left(\ln n\right)$ where $n$ is the sample
size. More recent approaches includes hashing \citep{noshad2019scalable},
classifier-based estimator \citep{mukherjee2020ccmi}, and inductive
maximum-entropy copula approach \citep{samo2021inductive}.

While estimating MI is hard, estimating CMI is of magnitudes harder
due to the presence of the conditioning set. Therefore, CMI estimation
methods have seen slower developments than its MI counterparts. Recent
developments for CMI estimation include \citep{runge2018conditional,molavipour2021neural,mukherjee2020ccmi}.

\textbf{Present work.} In this paper, we propose \textbf{DINE}\footnote{Source code and relevant data sets are available at \url{https://github.com/baosws/DINE}.}\textbf{
}(\textbf{D}iffeomorphic \textbf{I}nformation \textbf{N}eural \textbf{E}stimator)--a
unifying framework that closes the gap between the CMI and MI estimation
problems. The approach is advantageous compared with novel variational
methods in the way that it can estimate the exact information measure,
instead of a lower-bound. Specifically, we harness the observation
that CMI is invariant over conditional diffeomorphisms, i.e., differentiable
and invertible maps with differentiable inverse parametrized by the
conditioning variable.

As a direct consequence, first, we can now build a well-designed conditional
diffeomorphic transformation that breaks the statistical dependence
between the conditioning variable with the transformed variables,
but keeps the information measure unchanged, reducing the CMI to an
equivalent MI. Second, the approach offers a complete control over
the distribution form of the newly induced MI estimation problem,
thus we can easily restrict it to an amenable class of simple distributions
with well-established properties and estimate the resultant MI via
available analytic forms. Being aided by the powerful expressivity
of neural networks and normalizing flows \citep{papamakarios2021normalizing},
we can define a rich family of diffeomorphic transformations that
can handle a wide range of non-linear relationships, but are still
efficient in sample size and dimensionality.

\textbf{Contributions.} The key contributions of our study are summarized
as follows:
\begin{itemize}
\item We present a reduction of any CMI estimation problem to an equivalent
MI estimation problem with the unchanged information measure, which
overcomes the central difficulty in CMI estimation compared with MI
estimation.
\item We introduce \textbf{DINE}, a CMI estimator that is flexible, efficient,
and trainable via gradient-based optimizers. We also provide some
theoretical properties of the method.
\item We demonstrate the accuracy of \textbf{DINE} in estimating both MI
and CMI in comparisons with state-of-the-arts under varying sample
sufficiencies, dimensionalities, and non-linear relationships.
\item As a follow-up application of CMI estimation, we also use \textbf{DINE}
to test for conditional independence (CI)--an important statistical
problem with a central role in causality, and show that the test performs
really well, as well as being able to surpass state-of-the-art baselines
by large margins.\selectlanguage{british}%
\end{itemize}

\section{Background}

\selectlanguage{english}%
In this Section we formalize the CMI estimation problem and explain
the characterization of CMI that motivated our method.

Regarding notational interpretations, we use capitalized letters $X,Y$,
etc., for random variables/vectors, with lower-case letters $x,y$,
etc., being their respective realizations; the distribution is denoted
by $P_{\cdot}\left(\cdot\right)$ with the respective density $p_{\cdot}\left(\cdot\right)$.

\subsection{Conditional Mutual Information}

The Conditional Mutual Information between continuous random variables
$X$ and $Y$ given $Z$ (with respective compact support sets $\mathcal{X},\mathcal{Y},\mathcal{Z}$)
is defined as

{\small{}
\begin{align}
I\left(X,Y|Z\right) & =\int_{\mathcal{Z}}\int_{\mathcal{Y}}\int_{\mathcal{X}}p\left(x,y,z\right)\ln\frac{p\left(x,y|z\right)}{p\left(x|z\right)p\left(y|z\right)}dxdydz\label{eq:cmi-densities}\\
 & =\mathbb{E}_{p\left(x,y,z\right)}\left[\ln\frac{p\left(x,y|z\right)}{p\left(x|z\right)p\left(y|z\right)}\right]\label{eq:cmi-expectation}
\end{align}
}{\small\par}

where we have assumed that the underlying distributions admit the
corresponding densities $p\left(\cdot\right)$.

Having CMI defined, our technical research question is to estimate
$I\left(X,Y|Z\right)$ using the empirical distribution $P_{XYZ}^{(n)}$
of $n$ i.i.d. samples, without having access to the true distribution
$P_{XYZ}$.

\subsection{Conditional Mutual Information Re-parametrization}

We continue by recalling that MI is invariant via any diffeomorphism
(differentiable bijective transformation with differentiable inverse):
if $x'=\tau_{X}\left(x\right)$ and $y'=\tau_{Y}\left(y\right)$ are
diffeomorphisms, then $I\left(X,Y\right)=I\left(X',Y'\right)$ \citep{kraskov2004estimating}.
Inspired by this attractive property, CMI can be showed to be also
invariant via any \textit{conditional diffeomorphism}, which we define
as
\begin{defn}
(Conditional Diffeomorphism). A differentiable map $\tau\left(\cdot;\cdot\right):\mathcal{X}\times\mathcal{Z}\rightarrow\mathcal{X}'$
is called a conditional diffeomorphism if $\tau\left(\cdot;z\right):\mathcal{X}\rightarrow\mathcal{X}'$
is a diffeomorphism for any $z\in\mathcal{Z}$.
\end{defn}
The following Lemma states that it is possible to re-parametrize CMI
via some conditional diffeomorphisms:
\begin{lem}
\label{lemma:CMI-Re-parametrization}(CMI Re-parametrization). Let
$\tau_{X}:\mathcal{X}\times\mathcal{Z}\rightarrow\mathcal{X}'$ and
$\tau_{Y}:\mathcal{Y}\times\mathcal{Z}\rightarrow\mathcal{Y}'$ be
two conditional diffeomorphisms such that $P_{X'Y'|Z}=P_{X'Y'}$,
where $x'=\tau_{X}\left(x;z\right)$ and $y'=\tau_{Y}\left(y;z\right)$,
then the following holds:

\begin{equation}
I\left(X,Y|Z\right)=I\left(X',Y'\right)\label{eq:equivalence}
\end{equation}

\end{lem}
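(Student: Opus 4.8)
The plan is to split the argument into two independent reductions. First I would establish that CMI is invariant under the pair of conditional diffeomorphisms, i.e. $I(X,Y|Z)=I(X',Y'|Z)$, and then separately exploit the hypothesis $P_{X'Y'|Z}=P_{X'Y'}$ to collapse the conditioning, yielding $I(X',Y'|Z)=I(X',Y')$. Chaining these two equalities gives the claim.

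For the first reduction, I would fix an arbitrary $z\in\mathcal{Z}$ and treat $\tau_{X}(\cdot;z)$ and $\tau_{Y}(\cdot;z)$ as ordinary diffeomorphisms of $\mathcal{X}$ and $\mathcal{Y}$ respectively. Applying the change-of-variables formula to the conditional densities yields $p(x',y'|z)=p(x,y|z)/(J_{X}J_{Y})$, $p(x'|z)=p(x|z)/J_{X}$, and $p(y'|z)=p(y|z)/J_{Y}$, where $J_{X}=|\det\partial\tau_{X}(x;z)/\partial x|$ and $J_{Y}=|\det\partial\tau_{Y}(y;z)/\partial y|$ are the respective Jacobian determinants. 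The crucial observation---which is really the heart of the lemma---is that because $x'$ depends only on $x$ (with $z$ fixed) and $y'$ only on $y$, the joint Jacobian is block-diagonal and hence factorizes, so that $J_{X}$ and $J_{Y}$ cancel exactly in the density ratio:
\[
\frac{p(x',y'|z)}{p(x'|z)\,p(y'|z)}=\frac{p(x,y|z)}{p(x|z)\,p(y|z)}.
\]
Since the transformation merely relabels the integration variables while preserving the mass element $p(x,y,z)\,dx\,dy\,dz$, substituting this identity into the definition (\ref{eq:cmi-expectation}) leaves the expectation unchanged, giving $I(X,Y|Z)=I(X',Y'|Z)$.

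For the second reduction, I would invoke the hypothesis that $(X',Y')$ is independent of $Z$. This implies $p(x',y'|z)=p(x',y')$ and, by marginalizing, $p(x'|z)=p(x')$ and $p(y'|z)=p(y')$, so the log-ratio inside $I(X',Y'|Z)$ no longer depends on $z$. The outer expectation over $z$ then integrates out against $p(z)$ to unity, leaving exactly $\mathbb{E}_{p(x',y')}[\ln(p(x',y')/(p(x')p(y')))]=I(X',Y')$.

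I expect the main obstacle to be the rigorous justification of the Jacobian cancellation: one must confirm that the block-diagonal structure genuinely holds (i.e. that each conditional diffeomorphism acts on its own variable, with $z$ entering only as a parameter), and that the single Jacobian governing each marginal transformation is precisely the factor appearing in the joint determinant. Once this factorization is in place the remainder is bookkeeping, and the independence hypothesis does the rest painlessly.
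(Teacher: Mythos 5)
Your proposal is correct and follows essentially the same two-step route as the paper's own proof: first invariance of the conditional information under the pair of conditional diffeomorphisms, then collapsing the conditioning via the hypothesis $P_{X'Y'|Z}=P_{X'Y'}$. The only difference is cosmetic --- the paper conditions on $z$ and cites the known invariance of MI under diffeomorphisms, whereas you re-derive that invariance explicitly through the block-diagonal Jacobian cancellation, which is a sound (and slightly more self-contained) way to justify the same step.
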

\begin{proof}
See the Supplementary Material.\selectlanguage{british}%
\end{proof}

\section{The Diffeomorphic Information Neural Estimator (DINE)}

\selectlanguage{english}%
Our framework can be described using two main components, namely the
CMI \textit{approximator} and the CMI \textit{estimator}. While the
approximator concerns the hypothesis class of models that are used
to approximate the CMI given the access to the true data distribution,
the CMI estimator defines how to estimate the CMI using models in
the said approximator class, but with only a finite sample size.

\subsection{CMI Approximation}

We start by giving the general CMI approximator based on densities
(as a direct solution to Eqn. \ref{eq:cmi-expectation}):
\begin{defn}
(Density-based CMI approximator). Given a family of density approximators
with parameters $\theta\in\Theta$. The density-based CMI approximator
$I_{\Theta}\left(X,Y|Z\right)$ is defined as

\begin{align}
I_{\Theta}\left(X,Y|Z\right) & =\mathbb{E}_{p\left(x,y,z\right)}\left[\ln\frac{p_{\theta^{*}}\left(x,y|z\right)}{p_{\theta^{*}}\left(x|z\right)p_{\theta^{*}}\left(y|z\right)}\right]\label{eq:DNIM}
\end{align}

where the parameter $\theta^{*}=\left(\theta_{X}^{*},\theta_{Y}^{*},\theta_{XY}^{*}\right)\in\Theta$
are Maximum Likelihood Estimators (MLE) of the true densities $p\left(x,y|z\right)$,
$p\left(x|z\right)$, and $p\left(y|z\right)$:

\begin{align*}
\theta_{X}^{*} & =\argmax_{\theta_{X}}\mathbb{E}_{p\left(x,z\right)}\left[\ln p_{\theta}\left(x|z\right)\right]\\
\theta_{Y}^{*} & =\argmax_{\theta_{Y}}\mathbb{E}_{p\left(y,z\right)}\left[\ln p_{\theta}\left(y|z\right)\right]\\
\theta_{XY}^{*} & =\argmax_{\theta_{XY}}\mathbb{E}_{p\left(x,y,z\right)}\left[\ln p_{\theta}\left(x,y|z\right)\right]
\end{align*}

\end{defn}

The innovation of \textbf{DINE} is fueled by the invariance property
of CMI over diffeomorphic transformations as stated in Eqn.~\ref{eq:equivalence}.
To realize this end, the recently emerging Normalizing Flows (NF)
technique offers us the exact tool we need to exploit the benefits
we have just gained from the CMI re-parametrization.

Simply put, NF offers a general framework to model probability distributions
(in this case $P_{XY|Z}$) by expressing it in terms of a simple ``base''
distribution (here $P_{X'Y'}$) and a series of bijective transformations
(the diffeomorphisms in our method). For more technical details regarding
NFs, see \citep{kobyzev2020normalizing,papamakarios2021normalizing}.

Based on this, our approach involves the design of a class of conditional
normalizing flows (in contrast with the unconditional normalizing
flows that are not parametrized by $Z$), referred to as the \textit{Diffeomorphic
Information Neural Approximator} \textit{(DINA), }and formalized as
\begin{defn}
(Diffeomorphic Information Neural Approximator (DINA)). A DINA $\mathcal{D}_{\Theta}$
is a density-based CMI approximator characterized by the following
elements:
\begin{itemize}
\item A compact parameter domain $\Theta$.
\item A family of base distributions $\left\{ P_{\theta}\left(X',Y'\right)\right\} _{\theta\in\Theta}$.
\item A family of conditional normalizing flows $\left\{ \tau_{\theta}\left(\cdot;\cdot\right):\mathcal{X}\times\mathcal{Z}\rightarrow\mathcal{X}'\right\} _{\theta\in\Theta}$.
\end{itemize}
Then, the approximation is defined as

\[
I_{\Theta}\left(X,Y|Z\right)=I_{\Theta}\left(X',Y'\right)
\]

with $x'=\tau_{\theta_{X}^{*}}\left(x;z\right)$ and $y'=\tau_{\theta_{Y}^{*}}\left(y;z\right)$.
\end{defn}
As mentioned, MI estimation from finite data is still notoriously
difficult if the underlying distribution function is unknown or the
expectation is intractable. Fortunately, the use of normalizing flows
allows us to have a complete control over the distribution of the
surrogate variables $X'$ and $Y'$. Among the choices for the base
distribution, the Gaussian distribution is broadly preferable due
to its well-studied information-theoretic properties, especially the
availability of a closed form MI that we can make us of.

In more details, when the base distribution $P_{\theta}\left(X',Y'\right)$
is jointly Gaussian then we approximate as follows:
\begin{defn}
\label{DINA-for-Gaussian}(DINA-Gaussian). If $P_{\theta}\left(X',Y'\right)$
is multivariate Gaussian, then the DINA approximator with Gaussian
base is defined as

\begin{align*}
I_{\Theta}^{\mathcal{N}}\left(X',Y'\right) & =\frac{1}{2}\ln\frac{\det\Sigma_{p\left(x,z\right)}\left(X'\right)\det\Sigma_{p\left(y,z\right)}\left(Y'\right)}{\det\Sigma_{p\left(x,y,z\right)}\left(X'Y'\right)}
\end{align*}

where $\Sigma_{p\left(x,z\right)}\left(X'\right)$ is the covariance
matrix of $X'$ evaluated on the true distribution $P_{XZ}$, and
so on.
\end{defn}
For the rest of the main Sections we will assume that $P_{\theta}\left(X',Y'\right)$
is jointly multivariate Gaussian with standard Gaussian marginals.
That being said, the framework is still flexible to adapt to arbitrary
base distribution that are independent of $Z$, as long as it is efficient
to evaluate the marginal MI between $X'$ and $Y'$.

We describe in more details the architecture of the normalizing flows
employed for our framework in the next Section.

\subsection{Learning Conditional Diffeomorphisms}

Among a diversely developed literature of normalizing flows \citep{kobyzev2020normalizing,papamakarios2021normalizing},
\textit{autoregressive flows} remain one of the earliest and most
widely adopted. The most attractive characteristic of autoregressive
flows is their intrinsic expressiveness. More concretely, autoregressive
flows are universal approximators of densities \citep{papamakarios2021normalizing},
meaning they can approximate any probability density to an arbitrary
accuracy.

Suppose $X$ and $U$ are $d$-dimensional real-valued random vectors,
where we wish to model the true $p\left(x\right)$ with respect to
the base $p_{\theta}\left(u\right)$. Autoregressive flows transform
each dimension $i$ of $x$ using the information of the dimensions
$1..i-1$ of itself, hence the name ``auto''--regressive:

\[
u_{i}=\tau_{\theta}\left(x_{i};h_{i}\right),\;\text{where }h_{i}=c_{i}\left(x_{<i}\right)
\]

Here the diffeomorphism $\tau$ is referred to as the \textit{transformer},
and the function $c_{i}$ is called the \textit{conditioner}, which
encodes the information from the dimensions $<i$ of $x$ and defines
parameters for the transformer.

Since $u_{i}$ only depends on $x_{\leq i}$, the Jacobian matrix
$J_{\tau}$ of partial derivatives is lower triangular, so the modeled
log density is simply inferred using the change of variables rule
as

\[
\ln p_{\theta}\left(x\right)=\ln p_{\theta}\left(u\right)+\sum_{i=1}^{d}\ln\left|\frac{\partial u_{i}}{\partial x_{i}}\left(u\right)\right|
\]

Furthermore, fitting $p_{\theta}\left(x\right)$ to $p\left(x\right)$
involves maximizing the expected likelihood with respect to the parameter
$\theta$:

\[
\theta^{*}=\argmax_{\theta}\mathbb{E}_{p\left(x\right)}\left[\ln p_{\theta}\left(x\right)\right]
\]

Going back to our problem, for example, to build the conditional diffeomorphism
$x'=\tau_{\theta}\left(x;z\right)$ with a standard Gaussian distribution
$P_{X'}$, we design the base distribution, transformer, and conditioner
as follows:

\subsubsection{Base distribution}

First we choose $P_{X'}$ to be a $d$-dimensional standard uniform
distribution $\mathcal{U}\left(0,1\right)^{d}$, since its density
is constant everywhere, so no computation is required for $\ln p_{\theta}\left(x'\right)$
nor its derivatives.

Next, to transform $P_{X'}$ to the desired standard Gaussian $\mathcal{N}\left(\mathbf{0};\mathbf{1}_{d_{X}}\right)$,
we simply apply the inverse of the cumulative distribution function
(CDF) of the standard Gaussian ($\Phi^{-1}$) to $x'$ in an element-wise
fashion, where $\Phi\left(u\right)=\int_{-\infty}^{u}\mathcal{N}\left(t;0,1\right)dt$.

\subsubsection{Transformer}

With the base distribution being a standard uniform distribution,
a natural choice for the transformer is the CDF of some densities,
which is uniformly distributed for any strictly positive density.

To make the transformation more expressive, we compose the transformer
as a weighted combination of different \textcolor{black}{CDFs} parametrized
by the conditioner. More particularly, the transformer for the $i$-th
dimension is given by

\begin{equation}
\tau\left(x_{i};h_{i}\right)=\sum_{j=1}^{k}w_{ij}\left(h_{i}\right)\Phi\left(x_{i};\mu_{ij}\left(h_{i}\right),\sigma_{ij}^{2}\left(h_{i}\right)\right)\label{eq:gmm-transformer}
\end{equation}

where we have used a mixture of $k$ CDF components, with the $j$-th
component being a Gaussian CDF with mean $\mu_{ij}\left(h_{i}\right)$,
variance $\sigma_{ij}^{2}\left(h_{i}\right)$, and positive weight
$w_{ij}\left(h_{i}\right)$ such that $\sum_{j=1}^{k}w_{ij}\left(\cdot\right)=1$.

This transformer is also a universal approximator for CDFs because
its derivative is essentially a Gaussian mixture model (GMM), a canonical
universal approximator of densities \citep{goodfellow2016deep}. Put
simply, with a sufficient number of Gaussian components, $\tau\left(x_{i};h_{i}\right)$
can express any strictly monotonic $\mathbb{R}\rightarrow\left(0,1\right)$
map (hence invertible) with an arbitrary accuracy, which is followed
by the broad expressiveness of the transformer.

\subsubsection{Conditioner}

Since we would like to model the conditional diffeomorphism $\tau\left(x;z\right)$,
the conditioner function $c_{i}$ must encode both $x_{<i}$ and $z$,
so instead of $h_{i}=c_{i}\left(x_{<i}\right)$, now we let

\begin{equation}
h_{i}=c_{i}\left(x_{<i},z\right)\label{eq:conditioner}
\end{equation}

In contrary to the transformer, the conditioner needs not to be invertible,
so we can freely model it using any family of functions with inputs
$x_{<i}$ and $z$.

\subsubsection{Neural Network Parametrization}

To maximize the expressivity power of autoregressive flows explained
earlier, we parametrize all functional components in Eqn.~\ref{eq:gmm-transformer}
and Eqn.~\ref{eq:conditioner} with neural networks for each dimension,
hence the term ``Neural'' in \textbf{DINE}.

More specifically, let $d_{H}$ be the dimension of $H$, we model
$w_{i}:\mathbb{R}^{d_{H}}\rightarrow\left(0,1\right)^{k}$ as a Multiple
Layer Perceptron (MLP) with Softmax outputs, while $c_{i}:\mathbb{R}^{i-1+d_{Z}}\rightarrow\mathbb{R}^{d_{H}}$,
$\mu_{i}:\mathbb{R}^{d_{H}}\rightarrow\mathbb{R}^{k}$ and $\ln\sigma_{i}^{2}:\mathbb{R}^{d_{H}}\rightarrow\mathbb{R}^{k}$
are real-valued MLPs for all $i=1..d_{X}$.

Since the Jacobian matrix $J_{\tau}$ are now differentiable with
respect to the parameter $\theta$, any gradient-based continuous
optimization framework can be applied to learn $\theta^{*}$.

\subsection{CMI Estimation}

Having the ingredients above ready, we can now proceed to define the
\textbf{DINE} estimator for CMI:
\begin{defn}
(Diffeomorphic Information Neural Estimator (\textbf{DINE})). Consider
a DINA approximator $\mathcal{D}_{\Theta}$ with parameters in a compact
domain $\Theta$. \textbf{DINE} is defined as

\begin{align*}
I_{n}\left(X,Y|Z\right) & =\mathbb{E}_{p^{(n)}\left(x,y,z\right)}\left[\ln\frac{p^{(n)}\left(x',y'\right)}{p^{(n)}\left(x'\right)p^{(n)}\left(y'\right)}\right]
\end{align*}

with $x'=\tau_{\theta_{X}^{*}}\left(x;z\right)$ and $y'=\tau_{\theta_{Y}^{*}}\left(y;z\right)$.

where the parameter $\theta^{*}=\left(\theta_{X}^{*},\theta_{Y}^{*},\theta_{XY}^{*}\right)\in\Theta$
are Maximum Likelihood Estimators (MLE) of the empirical densities
$p^{(n)}\left(x,y|z\right)$, $p^{(n)}\left(x|z\right)$, and $p^{(n)}\left(y|z\right)$:

\begin{align*}
\theta_{X}^{*} & =\argmax_{\theta_{X}}\mathbb{E}_{p^{(n)}\left(x,z\right)}\left[\ln p_{\theta}\left(x|z\right)\right]\\
\theta_{Y}^{*} & =\argmax_{\theta_{Y}}\mathbb{E}_{p^{(n)}\left(y,z\right)}\left[\ln p_{\theta}\left(y|z\right)\right]\\
\theta_{XY}^{*} & =\argmax_{\theta_{XY}}\mathbb{E}_{p^{(n)}\left(x,y,z\right)}\left[\ln p_{\theta}\left(x,y|z\right)\right]
\end{align*}

\end{defn}
Note that $\theta_{X}$ and $\theta_{Y}$ here denote the parameters
of the normalizing flows and the marginal densities $p_{\theta}\left(x'\right)$
and $p_{\theta}\left(y'\right)$, while $\theta_{XY}$ is the parameter
of the joint density $p_{\theta}\left(x',y'\right)$, which is constrained
to have marginals $p_{\theta_{X}^{*}}\left(x'\right)$ and $p_{\theta_{Y}^{*}}\left(y'\right)$.

Specifically, under the case of multivariate Gaussian base, \textbf{DINE}
can be written as
\begin{defn}
\label{DINE-for-Gaussian}(\textbf{DINE-Gaussian}). If $P_{\theta}\left(X',Y'\right)$
is multivariate Gaussian, then the \textbf{DINE} estimator with Gaussian
base is defined as

\[
I_{n}^{\mathcal{N}}\left(X,Y|Z\right)=\frac{1}{2}\ln\frac{\det\Sigma_{n}\left(X'\right)\det\Sigma_{n}\left(Y'\right)}{\det\Sigma_{n}\left(X'Y'\right)}
\]

\end{defn}
\textcolor{blue}{}

This estimator does not require explicit density evaluations but instead
leverages the log determinants of sample covariance matrices, which
offers a low estimation variance. Therefore, from now on, \textbf{DINE}
is assumed to be \textbf{DINE-Gaussian} whenever it is mentioned unless
being explicitly specified.\selectlanguage{british}%

\section{Theoretical Properties}

\selectlanguage{english}%
In this Section we state some important theoretical results regarding
the \textbf{DINE} estimator, including estimation variance, consistency,
and sample complexity.

\subsection{Variance}
\begin{lem}
\label{Variance}(Variance of \textbf{DINE}). The asymptotic variance
of the DINE estimator is given by

\[
\mathrm{Var}\left[I_{n}\right]\stackrel{L}{\rightarrow}\mathrm{O}\left(\frac{d}{n}\right),\;\text{as }n\rightarrow\infty
\]

with $d$ being the dimensionality.

\end{lem}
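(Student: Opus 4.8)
The plan is to treat $I_n$ as a smooth functional of the joint sample covariance matrix and invoke the delta method. Write $v=(x',y')$ for the stacked surrogate vector and $\hat\Sigma=\Sigma_{n}(X'Y')$ for its $d\times d$ sample covariance, so that the marginal covariances $\Sigma_{n}(X')$ and $\Sigma_{n}(Y')$ are the two diagonal blocks of $\hat\Sigma$. Then the DINE-Gaussian estimator of Definition~\ref{DINE-for-Gaussian} is exactly $I_n=g(\hat\Sigma)$ for the fixed map $g(\Sigma)=\tfrac{1}{2}\bigl[\ln\det\Sigma_{X'}+\ln\det\Sigma_{Y'}-\ln\det\Sigma\bigr]$, where $\Sigma_{X'},\Sigma_{Y'}$ denote the diagonal blocks of $\Sigma$. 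The first step is to record that $\hat\Sigma$ is an average of i.i.d.\ rank-one terms, so that under finite fourth moments the central limit theorem gives $\sqrt{n}\,\mathrm{vec}(\hat\Sigma-\Sigma)\xrightarrow{d}\mathcal{N}(0,V)$, with $V$ the fourth-moment covariance of $vv^{\top}$.

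Second, I would verify that $g$ is continuously differentiable on the open cone of positive-definite matrices, using the standard identity $\nabla_{\Sigma}\ln\det\Sigma=\Sigma^{-1}$, so that $\nabla g(\Sigma)=\tfrac{1}{2}\bigl[\mathrm{blockdiag}(\Sigma_{X'}^{-1},\Sigma_{Y'}^{-1})-\Sigma^{-1}\bigr]$. Since the surrogates are constrained to have standard Gaussian marginals and a positive-definite joint covariance on the compact parameter domain $\Theta$, the matrix $\Sigma$ stays bounded away from the boundary of the cone, so $\nabla g$ is bounded and the delta method applies: $\sqrt{n}\,(I_n-g(\Sigma))\xrightarrow{d}\mathcal{N}(0,\sigma^{2})$ with $\sigma^{2}=\langle\nabla g,\,V\,\nabla g\rangle$. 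This already yields the $\mathrm{O}(1/n)$ rate for $\mathrm{Var}[I_n]$.

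The remaining---and genuinely delicate---step is to show that $\sigma^{2}=\mathrm{O}(d)$ rather than the $\mathrm{O}(d^{2})$ one would read off naively from the $\mathrm{O}(d^{2})$ entries of $\nabla g$ and $V$. The clean way to see this is to diagonalize: because $g$ is invariant under separate invertible linear maps of the $X'$- and $Y'$-blocks, $I$ depends on $\Sigma$ only through the canonical correlations $\rho_{1},\dots,\rho_{d_{X}}$ between $X'$ and $Y'$, and under the Gaussian base one has $g(\Sigma)=-\tfrac{1}{2}\sum_{i}\ln(1-\rho_{i}^{2})$. Substituting the whitened canonical coordinates decouples the quadratic form into $d_{X}$ scalar contributions, each of the form $\mathrm{Var}[-\tfrac{1}{2}\ln(1-\hat\rho_{i}^{2})]=\mathrm{O}(1/n)$ from the known asymptotic distribution of sample canonical correlations, while the off-diagonal cross terms vanish to leading order by their asymptotic independence. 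Summing the $d_{X}$ (equivalently $\mathrm{O}(d)$) contributions gives $\sigma^{2}=\mathrm{O}(d)$ and hence $\mathrm{Var}[I_n]=\mathrm{O}(d/n)$.

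The main obstacle is this last dimension-tracking step: controlling the cross terms in the quadratic form and confirming they do not inflate the rate to $d^{2}$. I expect to handle it through the canonical-correlation diagonalization above, where the asymptotic covariance becomes block-diagonal so that only the $\mathrm{O}(d)$ diagonal contributions survive at leading order; the boundedness of the spectrum on the compact $\Theta$ is what keeps each per-coordinate constant uniform in $d$.
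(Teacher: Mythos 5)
Your proposal is correct in substance but takes a genuinely different route from the paper. The paper's proof never invokes the delta method on the full functional: it simply splits $I_{n}$ into its three log-determinant summands, bounds the variance of the sum by a constant times the sum of the variances, and then cites the limiting law of the log-determinant of a sample covariance matrix \citep[Corollary~1]{cai2015law}, which directly supplies the asymptotic variance $\tfrac{2d}{n}$ for each term --- so the $\mathrm{O}(d)$ dimension dependence comes for free from the cited result rather than from any cross-term analysis. Your argument is more self-contained and more informative: the delta method plus the canonical-correlation diagonalization $g(\Sigma)=-\tfrac{1}{2}\sum_{i}\ln(1-\rho_{i}^{2})$ not only recovers the $\mathrm{O}(d/n)$ rate but identifies the leading constant in terms of the $\rho_{i}$, which the paper's term-by-term bound cannot do. The price is two extra points of care that the paper's route avoids: (i) the asymptotic independence of sample canonical correlations that you use to kill the cross terms holds only when the population canonical correlations are distinct --- in the degenerate case (e.g.\ the isotropic $\rho_{i}\equiv\rho$ setting used in the paper's own simulations, or $\rho_{i}\equiv 0$ under the null) the joint limit is not a product of independent normals, and you would instead want to work with the symmetric function $\ln\det(I-\hat R\hat R^{\top})$ directly, whose asymptotic variance is still $\mathrm{O}(d/n)$; and (ii) like the paper, you pass from convergence in distribution to a statement about $\mathrm{Var}[I_{n}]$ without checking uniform integrability, which is what the $\stackrel{L}{\rightarrow}$ notation is quietly papering over in both arguments. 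Neither point breaks your proof, but the first one should be addressed explicitly if you carry out the canonical-correlation step in detail.
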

\begin{proof}
See the Supplementary Material.
\end{proof}

\subsection{Consistency}

The quality of \textbf{DINE} depends on the choice of ($i$) a family
of normalizing flows and ($ii$) $n$ i.i.d. samples from the true
distribution $P_{XYZ}$.

The following Lemma states that, given a sufficiently expressive DINA,
we can approximate the information measure to arbitrary accuracy.
\begin{lem}
\label{Approximation}(Approximability of DINA). For any $\epsilon>0$,
there exists a DINA $\mathcal{D}_{\Theta}$ with some compact domain
$\Theta\subset\mathbb{R}^{c}$ such that

\[
\left|I\left(X,Y|Z\right)-I_{\Theta}\left(X,Y|Z\right)\right|\leq\epsilon,\ \text{almost surely}
\]

\end{lem}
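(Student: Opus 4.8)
The plan is to establish that a DINA with Gaussian base can approximate $I(X,Y|Z)$ arbitrarily well by combining two ingredients: the re-parametrization identity from Lemma~\ref{lemma:CMI-Re-parametrization}, and the universal approximation property of autoregressive flows built from Gaussian-mixture CDF transformers. First I would observe that, by Lemma~\ref{lemma:CMI-Re-parametrization}, if we could find \emph{exact} conditional diffeomorphisms $\tau_X(\cdot;z)$ and $\tau_Y(\cdot;z)$ rendering the push-forwards $X'=\tau_X(X;Z)$ and $Y'=\tau_Y(Y;Z)$ conditionally independent of $Z$ with the prescribed Gaussian base, then $I_\Theta(X,Y|Z)=I(X',Y')=I(X,Y|Z)$ with zero error. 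The role of the approximation argument is therefore to show that such ideal transformations can be approached within a DINA family to within $\epsilon$.

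The key steps, in order, would be as follows. (i) Characterize the target maps: for each fixed $z$, the conditional CDF transform $x'=F_{X|Z=z}(x)$ pushes $P_{X|Z=z}$ onto the uniform $\mathcal{U}(0,1)^{d_X}$ (using the autoregressive Rosenblatt decomposition for multivariate $X$), and composing with $\Phi^{-1}$ yields the standard Gaussian marginals; the same for $Y$. Under these ideal maps the surrogates are independent of $Z$ by construction, satisfying the hypothesis $P_{X'Y'|Z}=P_{X'Y'}$. (ii) Invoke the stated universality of the GMM-CDF transformer (Eqn.~\ref{eq:gmm-transformer}) together with the neural-network conditioner: with enough mixture components $k$ and sufficiently large MLPs, the parametrized flow $\tau_\theta(\cdot;z)$ can uniformly approximate the ideal conditional CDF transform over the compact supports $\mathcal{X},\mathcal{Y},\mathcal{Z}$, and the parameters live in a compact domain $\Theta\subset\mathbb{R}^c$. (iii) Propagate the uniform closeness of the transformations into closeness of the induced densities and hence of the integrand in Eqn.~\ref{eq:DNIM}; a continuity/stability estimate then bounds $|I(X,Y|Z)-I_\Theta(X,Y|Z)|$ by a quantity that vanishes as the approximation tightens, using the change-of-variables formula to control the log-density ratio and dominated convergence to pass the limit through the expectation.

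The main obstacle I expect is step~(iii): translating ``the flow is $\delta$-close to the ideal diffeomorphism'' into ``the resulting CMI is $\epsilon$-close.'' The CMI integrand involves a ratio of densities and its logarithm, so small perturbations of the transformation can in principle produce large perturbations of $\ln p_{\theta^*}$ where densities are small. Controlling this requires lower bounds on the relevant densities (which the compact-support and strictly-positive-density assumptions underlying the transformer's invertibility should supply) and uniform bounds on the Jacobian determinants $\prod_i |\partial \tau/\partial x_i|$ and their reciprocals, so that the log-determinant terms stay bounded. I would handle this by working on the compact supports $\mathcal{X},\mathcal{Y},\mathcal{Z}$, where continuity of all functional pieces gives uniform bounds, and by exploiting that the MLE objective aligns the approximate conditional densities with the true ones, so that any gap in the approximating class is bounded by the modeling error rather than accumulating. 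The ``almost surely'' qualifier is then inherited from the measure-theoretic care needed when the ideal conditional CDFs are defined only up to null sets of $z$.
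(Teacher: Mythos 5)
Your route is genuinely different from the paper's, and it contains gaps at exactly the point you flag as the "main obstacle," plus one you do not flag. The paper never constructs or approximates an ideal diffeomorphism: it works directly with the density-based form of Eqn.~\ref{eq:DNIM}, splits $\left|I-I_{\Theta}\right|$ by the triangle inequality into three expected log-density gaps ($p(x,y|z)$, $p(x|z)$, $p(y|z)$ against their $\theta^{*}$ counterparts), invokes the universal \emph{density} approximability of autoregressive flows to make each gap at most $\epsilon/3$ for some witness $\theta$, and then uses the maximality of the MLE $\theta^{*}$ (together with non-negativity of the resulting KL-type terms) to transfer the bound from $\theta$ to $\theta^{*}$. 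That is a short argument precisely because it never has to relate closeness of \emph{maps} to closeness of \emph{information measures}.

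The concrete gaps in your version are these. First, Lemma~\ref{lemma:CMI-Re-parametrization} requires $P_{X'Y'|Z}=P_{X'Y'}$ \emph{exactly}; once $\tau_{\theta}$ only approximates the ideal Rosenblatt transform, the surrogates retain residual dependence on $Z$, so $I(X,Y|Z)=I(X',Y'|Z)\neq I(X',Y')$, and you would need a quantitative stability statement ("approximately conditionally independent surrogates have approximately equal conditional and unconditional MI"). Mutual information is not continuous under weak perturbations of the joint law, so this step does not follow from uniform closeness of the transformations alone; it needs density-level control, at which point you are back to the paper's argument. Second, uniform ($C^{0}$) approximation of the conditional CDF map does not control the induced densities: the change-of-variables formula involves the Jacobian $\partial\tau/\partial x$, so you need $C^{1}$-approximation. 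The natural fix is to run the universality argument in the opposite direction --- the derivative of the transformer in Eqn.~\ref{eq:gmm-transformer} is a Gaussian mixture, a universal approximator of densities, so one approximates the conditional \emph{density} first and obtains the CDF map by integration --- which again collapses your construction into the paper's density-first proof. Third, $I_{\Theta}(X',Y')$ is defined through the modeled base densities (in the Gaussian case, through the covariances of the surrogates), not the true MI of $(X',Y')$, so even granting steps (i)--(ii) there is an additional discrepancy between $I(X',Y')$ and $I_{\Theta}(X',Y')$ that your argument leaves unaddressed. Your appeal to the MLE aligning the model with the truth is the right closing idea, but as written the chain from "the flow is $\delta$-close" to "the CMI is $\epsilon$-close" is not established.
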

\begin{proof}
See the Supplementary Material.
\end{proof}
The next Lemma declares that the estimator almost surely converges
to the approximator as the sample size approaches infinity.
\begin{lem}
\label{Estimation}(Estimability of \textbf{DINE}). For any $\epsilon>0$,
given a DINA $\mathcal{D}_{\Theta}$ with parameters in some compact
domain $\Theta\subset\mathbb{R}^{c}$, there exists a $N\in\mathbb{N}$
such that

\[
\forall n\geq N,\;\left|I_{n}\left(X,Y|Z\right)-I_{\Theta}\left(X,Y|Z\right)\right|\leq\epsilon,\ \text{almost surely}
\]

\end{lem}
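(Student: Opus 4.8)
The plan is to show that \textbf{DINE} (in its Gaussian form, which is the standing convention) converges almost surely to its \textbf{DINA} counterpart by isolating the two sources of discrepancy between $I_n$ and $I_\Theta$: the finite-sample estimation of the flow parameters, and the finite-sample estimation of the covariance matrices. To this end I would introduce the Gaussian functional
\begin{equation*}
G(\theta; P) = \tfrac{1}{2}\ln\frac{\det\Sigma_{P}(X'_\theta)\,\det\Sigma_{P}(Y'_\theta)}{\det\Sigma_{P}(X'_\theta Y'_\theta)},
\end{equation*}
where $X'_\theta = \tau_{\theta_X}(X;Z)$, $Y'_\theta = \tau_{\theta_Y}(Y;Z)$, and $\Sigma_P$ denotes covariance under $P$, so that $I_\Theta = G(\theta^*; P)$ with $\theta^*$ the population MLE and $P$ the true law, while $I_n = G(\hat\theta_n; P^{(n)})$ with $\hat\theta_n$ the empirical MLE and $P^{(n)}$ the empirical law. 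A triangle inequality then splits the target as
\begin{equation*}
|I_n - I_\Theta| \le \sup_{\theta\in\Theta}\bigl|G(\theta; P^{(n)}) - G(\theta; P)\bigr| + \bigl|G(\hat\theta_n; P) - G(\theta^*; P)\bigr|,
\end{equation*}
and it suffices to drive both terms below $\epsilon$ for all $n\ge N$.

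For the first (moment-estimation) term I would exploit the standing assumption that $\mathcal{X},\mathcal{Y},\mathcal{Z}$ are compact together with the compactness of $\Theta$. Since the conditional flows are jointly continuous in $(\theta,x,z)$ on the compact product $\Theta\times\mathcal{X}\times\mathcal{Z}$, the transformed coordinates and all their pairwise products are uniformly bounded and equicontinuous in $\theta$; hence the entries of the sample covariance matrix are empirical averages over a Glivenko--Cantelli class and converge to their population counterparts uniformly in $\theta$, almost surely. Because each population covariance is positive definite and its smallest eigenvalue is a continuous function of $\theta$ on the compact $\Theta$, it is bounded away from zero, so $\log\det$ is Lipschitz on the relevant matrix set; composing the uniform entrywise convergence with this Lipschitz map yields $\sup_{\theta}\bigl|G(\theta;P^{(n)}) - G(\theta;P)\bigr| \to 0$ almost surely.

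For the second (parameter-estimation) term I would invoke standard M-estimation (Wald-type) consistency for the three separate maximum-likelihood problems defining $\hat\theta_n$. Compactness of $\Theta$, continuity of the conditional log-likelihoods in $\theta$, and a uniform law of large numbers for the bounded, continuous log-density integrands give uniform convergence of the empirical to the population objective; under identifiability of the population maximizer this forces $\hat\theta_n \to \theta^*$ almost surely. Continuity of $\theta\mapsto G(\theta;P)$---again a consequence of continuity of the flows and of $\log\det$ on the non-degenerate matrix set---then sends $\bigl|G(\hat\theta_n;P) - G(\theta^*;P)\bigr|\to 0$, and combining the two limits produces the claimed $N$.

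The main obstacle I anticipate is the uniform control in the first term: one must establish a genuine uniform law of large numbers over the whole parameter set $\Theta$, not mere pointwise convergence, and simultaneously secure a uniform spectral lower bound on the covariances so that $\log\det$ does not blow up near degeneracy. Compact supports make the integrands bounded and equicontinuous, which is exactly what powers the Glivenko--Cantelli argument; the more delicate point is ruling out degenerate transformed covariances uniformly over $\Theta$, which relies on the flows remaining proper diffeomorphisms throughout the compact parameter domain. Identifiability of $\theta^*$ in the second term is the other hypothesis I would state explicitly, since without a unique (or value-unique) maximizer the argmax need not converge.
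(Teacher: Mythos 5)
Your proof is essentially correct, but it follows a genuinely different route from the paper's. The paper never passes through convergence of the parameter estimate $\hat\theta_n$ to $\theta^*$ at all: it works with the density-based form of the estimator, writes each of the three expected log-likelihood terms as a maximum over $\Theta$ of the corresponding empirical or population objective, and then applies the elementary inequality $\left|\max_{\theta}f(\theta)-\max_{\theta}g(\theta)\right|\leq\max_{\theta}\left|f(\theta)-g(\theta)\right|$ before invoking a law of large numbers. This sidesteps exactly the two issues you flag as delicate: no identifiability of $\theta^*$ is needed (only the optimal \emph{values} matter, not the optimizers), and no separate continuity-of-$G$ or argmax-consistency argument is required. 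Your decomposition instead isolates a moment-estimation error, uniform over $\Theta$, from a parameter-estimation error $\left|G(\hat\theta_n;P)-G(\theta^*;P)\right|$, which forces you to assume identifiability and to run a Wald-type consistency argument; in exchange, you prove the statement directly for the covariance-determinant functional that \textbf{DINE-Gaussian} actually computes, rather than for the density-based surrogate the paper analyzes, which is arguably more faithful to the implemented estimator. One point in your favour: you are explicit that the convergence must be a \emph{uniform} law of large numbers over the compact $\Theta$ (Glivenko--Cantelli with a bounded, equicontinuous class), whereas the paper's proof takes a maximum over $\theta$ and then cites the plain law of large numbers, deferring the genuine uniformity argument to the covering-number construction in its sample-complexity theorem. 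If you adopt the paper's max-difference trick for your second term (applied to the likelihood objectives), you can drop the identifiability hypothesis entirely.
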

\begin{proof}
See the Supplementary Material.
\end{proof}
Finally, the two Lemmas above together prove the consistency of \textbf{DINE}:
\begin{thm}
\label{Consistency}(Consistency of \textbf{DINE}). \textbf{DINE}
is consistent whenever DINA is sufficiently expressive.
\end{thm}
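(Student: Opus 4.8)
The plan is to prove the Consistency Theorem by chaining together the two preceding lemmas via a triangle inequality, treating the approximation error and the estimation error separately. The target quantity is $\left|I\left(X,Y|Z\right)-I_{n}\left(X,Y|Z\right)\right|$, and the natural strategy is to insert the intermediate approximator term $I_{\Theta}\left(X,Y|Z\right)$ and split:
\begin{equation*}
\left|I-I_{n}\right|\leq\left|I-I_{\Theta}\right|+\left|I_{\Theta}-I_{n}\right|.
\end{equation*}
The first summand is controlled by Lemma~\ref{Approximation} (Approximability of DINA) and the second by Lemma~\ref{Estimation} (Estimability of DINE).

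First I would fix an arbitrary $\epsilon>0$. By Lemma~\ref{Approximation}, the phrase ``whenever DINA is sufficiently expressive'' in the theorem statement is exactly the hypothesis that lets me choose a DINA $\mathcal{D}_{\Theta}$ with a compact parameter domain $\Theta\subset\mathbb{R}^{c}$ such that $\left|I-I_{\Theta}\right|\leq\epsilon/2$ almost surely. Having pinned down this $\Theta$, I would then invoke Lemma~\ref{Estimation}: for this same fixed DINA there exists $N\in\mathbb{N}$ so that for all $n\geq N$ we have $\left|I_{\Theta}-I_{n}\right|\leq\epsilon/2$ almost surely. Combining the two bounds through the triangle inequality yields $\left|I-I_{n}\right|\leq\epsilon$ almost surely for all $n\geq N$, which is precisely the definition of (strong) consistency.

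The one subtlety worth flagging concerns the order of quantifiers and the handling of the almost-sure statements. The approximation bound must be established \emph{first}, because the choice of DINA (and hence of $\Theta$) made there is what Lemma~\ref{Estimation} subsequently takes as input; the estimation threshold $N$ depends on that fixed $\Theta$. I would also note that intersecting two almost-sure events leaves an almost-sure event, so the combined inequality holds almost surely. The main obstacle, such as it is, lies not in this final assembly but upstream: the real analytic content is carried entirely by Lemmas~\ref{Approximation} and~\ref{Estimation}, whose proofs must respectively exploit the universal approximation property of the autoregressive Gaussian-mixture flows together with the invariance of CMI under conditional diffeomorphisms (Lemma~\ref{lemma:CMI-Re-parametrization}), and the convergence of the empirical covariance determinants to their population counterparts. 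Given those lemmas as assumed, the consistency theorem itself reduces to the short $\epsilon/2$ argument sketched above.
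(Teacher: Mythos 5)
Your proposal is correct and follows essentially the same route as the paper's own proof: fix $\epsilon>0$, invoke Lemma~\ref{Approximation} to choose a DINA with $\left|I-I_{\Theta}\right|\leq\epsilon/2$, then Lemma~\ref{Estimation} for that fixed $\Theta$ to get $\left|I_{n}-I_{\Theta}\right|\leq\epsilon/2$ for all $n\geq N$, and conclude by the triangle inequality. Your remarks on the order of quantifiers and on intersecting the two almost-sure events are a slightly more careful articulation than the paper gives, but the argument is the same.
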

\begin{proof}
See the Supplementary Material.
\end{proof}

\subsection{Sample Complexity}

Here we state the general sample complexity for general \textbf{DINE},
which is followed by the complexity of \textbf{DINE-Gaussian}.

We make the following assumptions: the log-densities are bounded in
$\left[-M,M\right]$ and $L$-Lipschitz continuous with respect to
the parameters $\theta$, and the parameter domain $\Theta\subset\mathbb{R}^{c}$
is bounded with $\left\Vert \theta\right\Vert \leq K$.
\begin{thm}
\label{Sample-complexity}(Sample complexity of general \textbf{DINE}).
Given any accuracy and confidence parameters $\epsilon,\delta>0$,
the following holds with probability at least $1-\delta$ 

\[
\left|I\left(X,Y|Z\right)-I_{n}\left(X,Y|Z\right)\right|<\epsilon
\]
whenever the sample size $n$ suffices at least

\[
\frac{72M^{2}}{\epsilon^{2}}\left(c\ln\left(\frac{96KL\sqrt{c}}{\epsilon}\right)+\ln\frac{2}{\delta}\right)
\]
\selectlanguage{british}%
\end{thm}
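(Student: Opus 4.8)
The plan is to establish the bound through a uniform convergence (concentration) argument over the parameter space $\Theta$, which is the standard route whenever an estimator is obtained by plugging learned parameters into an empirical average. I would first cast both the approximator and the estimator as the same functional evaluated under two different measures: writing $f_\theta(x,y,z) = \ln p_\theta(x,y\mid z) - \ln p_\theta(x\mid z) - \ln p_\theta(y\mid z)$, the approximator is $\mathbb{E}_{p}[f_{\theta^*}]$ while the estimator is $\mathbb{E}_{p^{(n)}}[f_{\theta^*}]$ evaluated at the empirical MLE. Invoking Lemma~\ref{Approximation} to absorb the $|I - I_\Theta|$ gap, it suffices to control the estimation gap, and I would upper bound it by the uniform deviation $\sup_{\theta\in\Theta}|\mathbb{E}_p[f_\theta] - \mathbb{E}_{p^{(n)}}[f_\theta]|$, which dominates the difference regardless of which particular (population or empirical) optimizer is substituted.

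Next I would discretize $\Theta$. Because $\Theta \subset \mathbb{R}^c$ is bounded by $\|\theta\| \leq K$, it admits a finite $\eta$-net $\Theta_\eta$ whose cardinality obeys the standard volumetric bound $|\Theta_\eta| \leq (CK\sqrt{c}/\eta)^c$ for an absolute constant $C$. For each fixed net point, $\mathbb{E}_{p^{(n)}}[f_\theta]$ is an average of $n$ i.i.d. terms, and the boundedness assumption (each $\ln$-density in $[-M,M]$, so $f_\theta$ lies in an interval of width $6M$) lets me apply Hoeffding's inequality to obtain $\Pr(|\mathbb{E}_{p^{(n)}}[f_\theta] - \mathbb{E}_p[f_\theta]| \geq t) \leq 2\exp(-nt^2/(18M^2))$. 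A union bound over $\Theta_\eta$ then controls all net points simultaneously.

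To pass from the net back to all of $\Theta$, I would use the $L$-Lipschitz assumption: any $\theta$ is within $\eta$ of some net point $\theta'$, and Lipschitzness gives $|f_\theta - f_{\theta'}| \leq L\eta$ pointwise, so the deviation at $\theta$ exceeds that at $\theta'$ by at most $2L\eta$ (one $L\eta$ per measure). Choosing $t = \epsilon/2$ and $\eta = \epsilon/(4L)$ makes the total deviation at most $\epsilon$, after which setting the union-bound failure probability equal to $\delta$ and solving for $n$ yields $n \geq (72M^2/\epsilon^2)(c\ln(96KL\sqrt{c}/\epsilon) + \ln(2/\delta))$, with the factor $72$ arising from $18/(\epsilon/2)^2$ and the logarithmic term from substituting $\eta = \epsilon/(4L)$ into the covering number.

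The main obstacle I anticipate is not the concentration machinery but the reduction step: because DINE plugs in the empirical MLE $\theta^*_n$ rather than a fixed population optimizer, the approximator and estimator are evaluated at different parameters as well as different measures. The uniform-over-$\Theta$ bound is precisely what circumvents having to prove $\theta^*_n \to \theta^*$ quantitatively, but one must argue carefully that the supremum dominates the plug-in gap, and this relies on the boundedness and Lipschitz regularity holding uniformly on $\Theta$. These properties must in turn be traced from the assumptions on the individual conditional log-densities up to the composite ratio $f_\theta$, accounting for the factor-of-three inflation in both the range and the effective Lipschitz constant, and matching the exact constants $72$ and $96$ then requires bookkeeping the Hoeffding width and the covering-number prefactor consistently.
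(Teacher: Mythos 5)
Your proposal follows essentially the same route as the paper's proof: split off the approximation gap via Lemma~\ref{Approximation}, then control the estimation gap by a uniform deviation bound over $\Theta$ obtained from Hoeffding's inequality, a covering-number/union-bound argument, and the Lipschitz assumption, with the reduction to the supremum justified by $\left|\max A-\max B\right|\leq\sup\left|A-B\right|$ applied to each of the three log-likelihood terms. The only discrepancy is the constant bookkeeping you yourself flag: the paper budgets $\epsilon/2$ for approximation and $\epsilon/6$ for each of the three estimation terms (Hoeffding at deviation $\epsilon/6$ with range width $2M$ yields the $72M^{2}$, and covering radius $r=\epsilon/\left(48L\right)$ yields the $96$), whereas your allocation of $t=\epsilon/2$ and $\eta=\epsilon/\left(4L\right)$ already spends the full $\epsilon$ on estimation and overshoots once the $\epsilon/2$ approximation gap is added, so it needs rescaling.
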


\section{Empirical Evaluations}

\selectlanguage{english}%
\begin{figure}[t]
\centering{}\includegraphics[width=1\columnwidth]{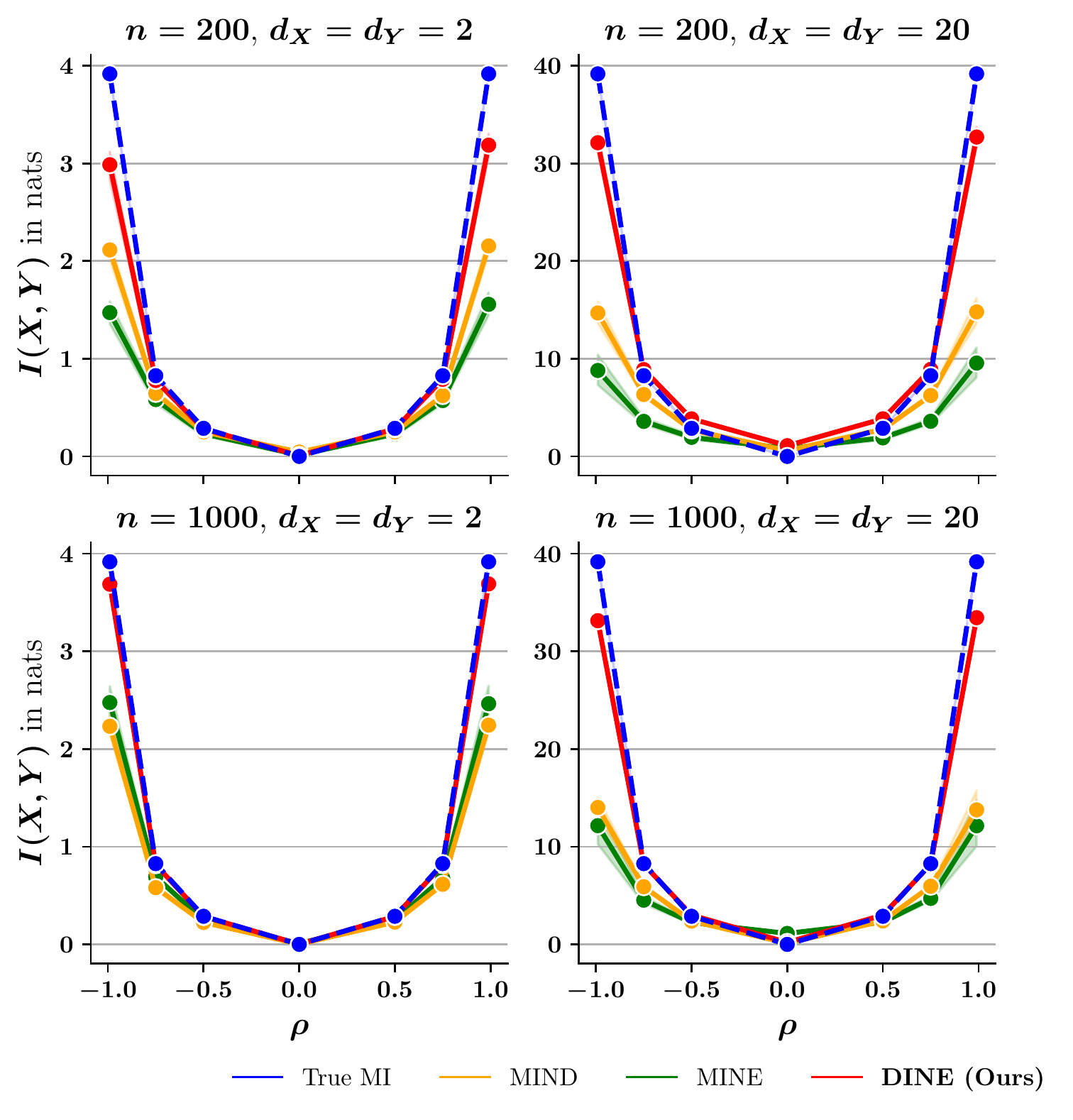}\caption{\label{fig:exp-mi}Mutual Information estimation performance. We compare
the proposed \textbf{DINE} estimator with MINE \citep{belghazi2018mutual}
and MIND \citep{samo2021inductive}. Rows: sample sizes, columns:
dimensionalities. The dashed line denotes the true MI and the other
lines show the averaged estimations for each method over 50 independent
runs. The shaded areas show the estimated 95\% confidence intervals.}
\end{figure}

\begin{figure}[t]
\begin{centering}
\includegraphics[width=1\columnwidth]{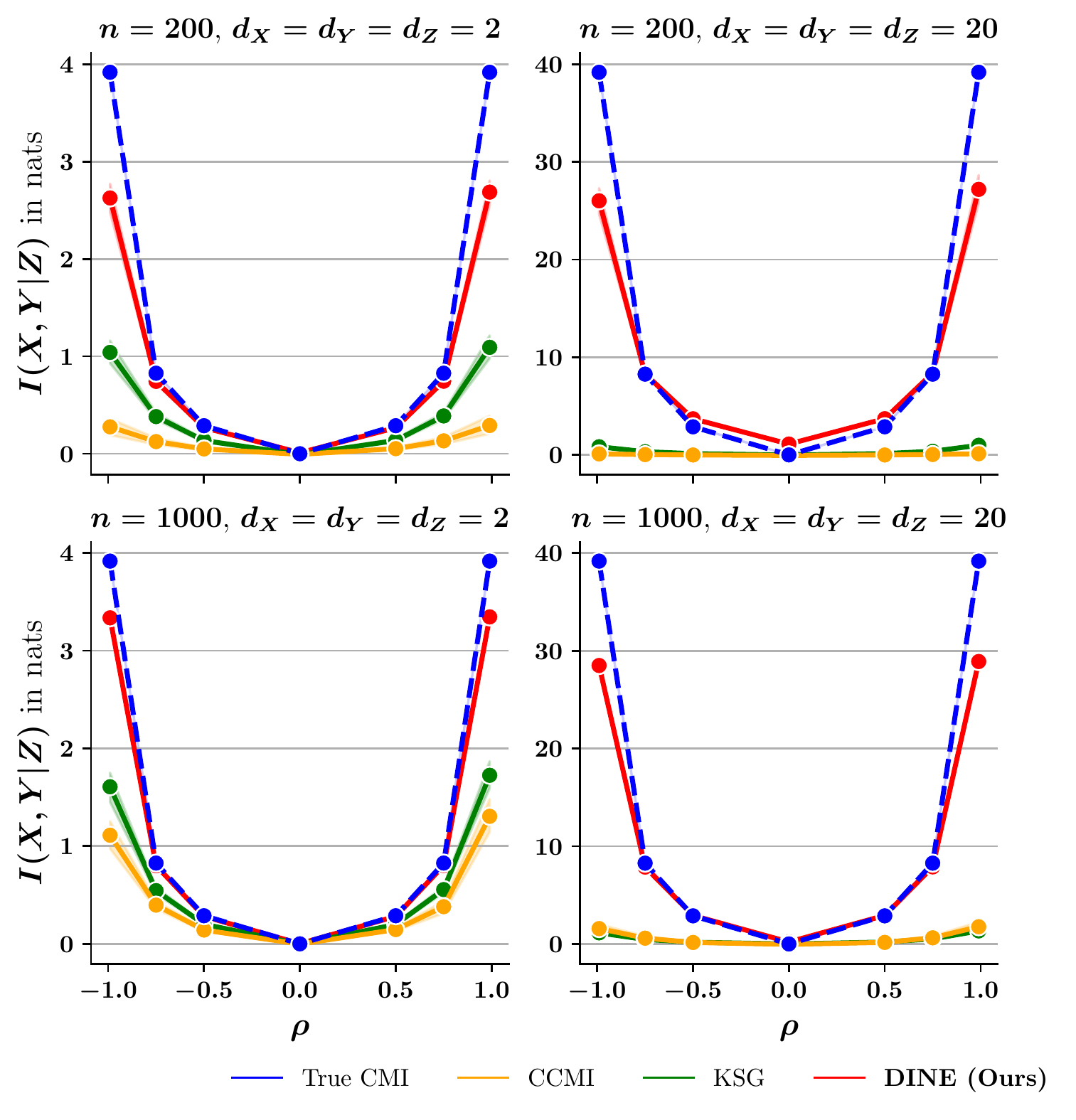}
\par\end{centering}
\caption{\label{fig:exp-cmi}Conditional Mutual Information estimation performance.
We compare the proposed \textbf{DINE} estimator with CCMI \citep{mukherjee2020ccmi}
and KSG \citep{kraskov2004estimating}. Rows: sample sizes, columns:
dimensionalities. The dashed line denotes the true CMI and the other
lines show the averaged estimations for each method over 50 independent
runs. The shaded areas show the estimated 95\% confidence intervals.}
\end{figure}

\begin{figure*}[t]
\centering{}\includegraphics[width=1\textwidth]{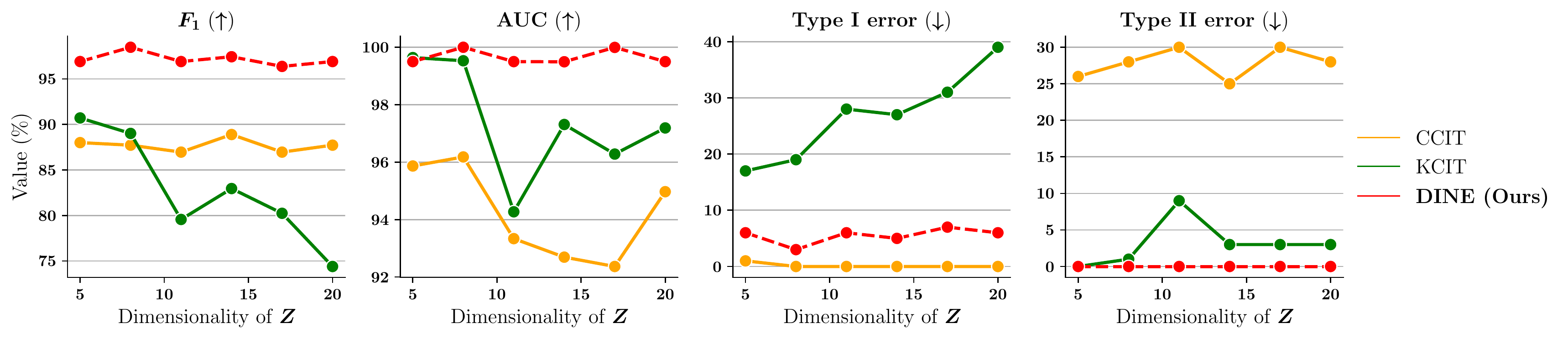}\caption{\label{fig:exp-cit}Conditional Independence Testing performance as
a function of dimensionality. The evaluation metrics are $F_{1}$
score, AUC (higher is better), Type I and Type II error rates (lower
is better), evaluated on 200 independent runs. We compare the proposed
\textbf{DINE}-based CI test with KCIT \citep{zhang2012kernel} and
CCIT \citep{sen2017model}.}
\end{figure*}

In what follows, we illustrate that \textbf{DINE} is far more effective
than the alternative MI and CMI estimators in both sample size and
dimensionality, especially when the actual information measure is
high.

Implementation details and parameters selection of all methods are
given in the Supplementary Material.

\subsection{Synthetic Data}

We consider a diverse set of simulated scenarios covering different
degrees of non-linear dependency, sample size, and dimensionality
settings. For each independent simulation, we first generate two jointly
multivariate Gaussian variables $X',Y'$ with same dimensions $d_{X}=d_{Y}=d$
and shared component-wise correlation, i.e., $\left(X',Y'\right)\sim\mathcal{N}\left(\mathbf{0};\left[\begin{array}{cc}
\mathbf{I}_{d} & \rho\mathbf{I}_{d}\\
\rho\mathbf{I}_{d} & \mathbf{I}_{d}
\end{array}\right]\right)$ with a correlation $\rho\in\left(-1,1\right)$. As for $Z$, we randomly
choose one of three distributions $\mathcal{U}\left(-0.01,0.01\right)^{d_{Z}}$,
$\mathcal{N}\left(\mathbf{0};0.01\mathbf{I}_{d_{Z}}\right)$, and
$\mathrm{Laplace}\left(\mathbf{0};0.01\mathbf{I}_{d_{Z}}\right)$.
Then, $X$ and $Y$ are defined as

\begin{align*}
X & =f\left(AZ+X'\right)\\
Y & =g\left(BZ+Y'\right)
\end{align*}

where $A,B\in\mathbb{R}^{d\times d_{Z}}$ have independent entries
drawn from $\mathcal{N}\left(0;1\right)$, and $f$, $g$ are randomly
chosen from a rich set of mostly non-linear bijective functions $f\left(x\right)\in\left\{ \alpha x,x^{3},e^{-x},\frac{1}{x},\ln x,\frac{1}{1+e^{-x}}\right\} $.\footnote{The input is appropriately scaled and translated before being fed
into the corresponding function.}

By construction, we have the ground truth CMI $I\left(X,Y|Z\right)=I\left(X',Y'\right)=-\frac{d}{2}\ln\left(1-\rho^{2}\right)$.

Finally, $n$ i.i.d. samples $\left\{ \left(x^{(i)},y^{(i)},z^{(i)}\right)\right\} _{i=1}^{n}$
are generated accordingly.

\subsection{Comparison with MI Estimators}

We compare \textbf{DINE} with two state-of-the-arts, the variational
MI lower bound estimator MINE\footnote{We adopt the implementation at \url{https://github.com/karlstratos/doe}}
\citep{belghazi2018mutual} and the inductive copula-based MIND\footnote{We use the author's implementation \url{https://github.com/kxytechnologies/kxy-python/}}
method \citep{samo2021inductive}, which are two of the best approaches
focusing solely on MI estimation.

In this setting, we let $Z$ be empty, i.e., $d_{Z}=0$, and vary
the correlation $\rho$ in\textcolor{red}{{} }$\left[-0.99,0.99\right]$.
We consider both the low sample size $n=200$ and the large sample
size $n=1000$, as well as the low-dimensional $d=2$ and high-dimensional
$d=20$ settings. For each of the setting combinations, we evaluate
the methods using the same 50 independent synthetic data sets according
to the described simulation scheme.

The empirical results are recorded in Figure~\ref{fig:exp-mi}. We
observe that for all scenarios, our \textbf{DINE} method produces
nearly identical estimates with the ground truth, regardless of sample
size or dimensionality, with clear distinctions from MINE and MIND.
Under the most limited setting of low sample size and high dimensionality
(top-right), \textbf{DINE} estimates are still remarkably close the
the ground truth, while MINE and MIND visibly struggles when the ground
truth mutual information is high. On the other hand, for the most
favorable setting of large sample size and low dimensionality (bottom-left),
\textbf{DINE} estimates approach the ground truth with a nearly invisible
margin, whereas the error gaps of MINE and MIND estimates are clearly
distinguishable.

\subsection{Comparison with CMI Estimators}

For this context, we compare \textbf{DINE} with the state-of-the-art
classifier based estimator CCMI\footnote{We use the implementation from the authors at \url{https://github.com/sudiptodip15/CCMI}}
\citep{mukherjee2020ccmi} and the popular \textit{k}-NN based estimator
KSG \citep{kraskov2004estimating}. The experiment setup follows closely
to the MI estimation experiment, except that now we let $d_{Z}=d_{X}=d_{Y}$.

Figure~\ref{fig:exp-cmi} captures the results. We can see that,
compared to the MI estimation setting, the conditioning variable $Z$
degrades the performance of \textbf{DINE}, however only for high ground
truth values and not at a considerable magnitude. Meanwhile, the competitors
CCMI and KSG do not adapt well to the high dimensional setting when
$d_{X}=d_{Y}=d_{Z}=20$. Yet, for the low dimensional case, they still
perform poorly relative to our \textbf{DINE} approach, especially
when the underlying CMI is high.

\selectlanguage{british}%

\section{Application in Conditional Independence Testing}

\selectlanguage{english}%
Among the broad range of applications of CMI estimation, the Conditional
Independence (CI) test is perhaps one of the most desired. CI testing
greatly benefits the field of Causal Discovery \citep{spirtes2000causation}.
Therefore, in this Section we illustrate that our approach may be
used to construct a CI test that strongly outperforms other competitive
baselines solely designed for the same goal, as a down-stream evaluation
of \textbf{DINE}. Resultantly, the test can be expected to improve
Causal Discovery methods significantly.

\subsection{Context}

Formally, CI testing concerns with the statistical hypothesis test
with

\begin{align*}
\mathcal{H}_{0} & :X\indep Y\mid Z\\
\mathcal{H}_{1} & :X\not\indep Y\mid Z
\end{align*}

\subsubsection{Related works}

In the context of CI testing, kernel-based approaches \citep{zhang2012kernel},
are generally the most popular and powerful methods for CI testing,
which adopt kernels to exploit high order statistics that capture
the CI structure of the data. Recently, more modern approaches have
also been proposed, such as GAN-based \citep{shi2021double} or classification-based
\citep{sen2017model} methods with promising results.

\subsubsection{Description of the test}

We design a simple \textbf{DINE}-based CI test inspired by the observation
that $X\indep Y\mid Z\Leftrightarrow I\left(X',Y'\right)=0$ and use
$I\left(X',Y'\right)$ as the test statistics. Next, we employ the
permutation technique \citep{doran2014permutation} to simulate the
null distribution of the test statistics and estimate the $p$-value.
Finally, given a user-defined significance level $\alpha$, we reject
the null hypothesis $\mathcal{H}_{0}$ if $p\text{-value}<\alpha$
and accept it otherwise.

\subsection{Experiments}

To numerically evaluate the quality of the aforementioned \textbf{DINE}-based
CI test, we compare it with the prominent kernel-based test KCIT\footnote{The implementation from the causal-learn package is adopted \url{https://github.com/cmu-phil/causal-learn}}
\citep{zhang2012kernel} and a more recent state-of-the-art classifier-based
test CCIT\footnote{The authors' implementation can be found at \url{https://github.com/rajatsen91/CCIT}}
\citep{sen2017model}.

In this experiment, we fix $d_{X}=d_{Y}=1$ and consider $d_{Z}$
increasing from low to high dimensionalities in $\left[5,20\right]$,
with a constant sample size $n=1000$, and compare the performance
of \textbf{DINE} against the baselines, assessed under four different
criterions, namely the $F_{1}$ score, AUC (higher is better), Type
I, and Type II error rates (lower is better). These metrics are evaluated
using 200 independent runs (100 runs for each label) for each combination
of method and dimensionality of $Z$. Additionally, for $F_{1}$ score,
Type I, and Type II errors, we adopt the common significance level
of $\alpha=0.05$. Furthermore, for the case of conditional independence
we let $\rho=0$, whereas for the conditional dependence case we randomly
draw $\rho\sim\mathcal{U}\left(\left[-0.99,0.1\right]\cup\left[0.1,0.99\right]\right)$.
The data is generated according to the CMI experiment in the previous
Section.

The numerical comparisons in CI testing are presented in Figure~\ref{fig:exp-cit},
which show that the \textbf{DINE}-based CI test obtains very good
scores under all performance metrics. More particularly, it nearly
never makes any Type II error, meaning when the relationship is actually
conditional dependence, the CMI estimate is rarely too low to be misclassified
as conditional independence; meanwhile, its Type I errors are roughly
proximate to the rejection threshold $\alpha$, which is expected
from the definition of $p$-value. Moreover, the $F_{1}$ and AUC
scores of \textbf{DINE} are also highest in all cases and closely
approach 100\%, suggesting the superior adaptability of \textbf{DINE}
to both non-linear relationships and higher dimensionalities.

Regarding the baseline methods, KCIT and CCIT show completely opposite
behaviors to each other. While KCIT has relatively low Type II errors,
its Type I errors are quite high even at lower dimensionalities and
increase rapidly in the increment dimensionality. Conversely, CCIT
is quite conservative in Type I error as a trade-off for the consistently
high Type II errors. However, their AUC scores are still high, indicating
that the optimal threshold exists, but their $p$-value estimates
do not reflect accurately the true $p$-value.\selectlanguage{british}%

\section{Conclusion}

In this paper we propose \textbf{DINE}, a novel approach for CMI estimation.
Through the use of normalizing flows, we simplify the challenging
CMI estimation problem into the easier MI estimation, which can be
designed to be efficiently evaluable, overcoming the inherent difficulties
in existing approaches. We compare \textbf{DINE} with best-in-class
methods for MI estimation, CMI estimation, in which \textbf{DINE}
shows considerably better performance as compared to its counterparts,
as well as being friendly in sample size and dimensionality while
adapting well to several non-linear relationships. Finally, we show
that \textbf{DINE} can also be used to define a CI test with an improved
effectiveness in comparison with state-of-the-art CI tests, thanks
to its accurate CMI estimability.

\bibliography{references}

\appendix
\selectlanguage{english}%

\section*{Supplementary Material for \textquotedblleft Diffeomorphic Information
Neural Estimation\textquotedblright{}}

\RestyleAlgo{ruled}

\begin{algorithm}[th]
\caption{The \textbf{DINE-Gaussian} Algorithm for CMI estimation.\label{alg:DINE-Gaussian}}

\textbf{Input:} Empirical samples $P_{XYZ}^{(n)}=\left\{ \left(x_{i},y_{i},z_{i}\right)\right\} _{i=1}^{n}$.

\textbf{Output:} An estimation of $I\left(X,Y|Z\right)$.
\begin{enumerate}
\item Define two autoregressive flows $\tau_{\theta_{X}}\left(x;z\right)$
and $\tau_{\theta_{X}}\left(y;z\right)$ with the standard Gaussian
base densities $p_{\theta}\left(x'\right)$, $p_{\theta}\left(y'\right)$.
\item Learn $\theta^{*}$ by maximum likelihood estimation:

\[
\theta^{*}=\argmax_{\theta}\frac{1}{n}\left[\ln p_{\theta}\left(x_{i}|z_{i}\right)+\ln p_{\theta}\left(y_{i}|z_{i}\right)\right]
\]

where

{\footnotesize{}
\begin{align*}
\ln p_{\theta}\left(x_{i}|z_{i}\right) & =\ln p_{\theta}\left(\tau_{\theta_{X}}\left(x_{i};z_{i}\right)\right)+\ln\left|\frac{\partial\tau_{\theta_{X}}}{\partial x}\left(x_{i};z_{i}\right)\right|\\
\ln p_{\theta}\left(y_{i}|z_{i}\right) & =\ln p_{\theta}\left(\tau_{\theta_{Y}}\left(y_{i};z_{i}\right)\right)+\ln\left|\frac{\partial\tau_{\theta_{Y}}}{\partial y}\left(y_{i};z_{i}\right)\right|
\end{align*}
}{\footnotesize\par}
\item Evaluate $x'_{i}$ and $y_{i}'$ using the learned parameter $\theta^{*}$
for all $i=1..n$:

\begin{align*}
x_{i}' & =\tau_{\theta_{X}^{*}}\left(x_{i};z_{i}\right)\\
y_{i}' & =\tau_{\theta_{Y}^{*}}\left(y_{i};z_{i}\right)
\end{align*}

\item Estimate the (uncentered) sample covariance matrices:

\begin{align*}
\Sigma_{X'}^{(n)} & =\frac{1}{n-1}\sum_{i=1}^{n}x'_{i}x{}_{i}^{\prime T}\\
\Sigma_{Y'}^{(n)} & =\frac{1}{n-1}\sum_{i=1}^{n}y'_{i}y{}_{i}^{\prime T}\\
\Sigma_{X'Y'}^{(n)} & =\frac{1}{n-1}\sum_{i=1}^{n}u'_{i}u{}_{i}^{\prime T}
\end{align*}

where $u_{i}$ is the concatenation of $x'_{i}$ and $y'_{i}$.
\item Return the estimate:

\[
\frac{1}{2}\ln\frac{\det\Sigma_{X'}^{(n)}\det\Sigma_{Y'}^{(n)}}{\det\Sigma_{X'Y'}^{(n)}}
\]
\end{enumerate}
\end{algorithm}

\RestyleAlgo{ruled}

\begin{algorithm}[th]
\caption{The \textbf{DINE}-based\textbf{ }Conditional Independence Testing
Algorithm.\label{alg:DINE-CI-test}}

\textbf{Input:} Empirical samples $P_{XYZ}^{(n)}=\left\{ \left(x_{i},y_{i},z_{i}\right)\right\} _{i=1}^{n}$,
number of bootstraps $B$, and a significance level $\alpha$.

\textbf{Output:} The test's $p$-value and whether $X\indep Y|Z$
or not.
\begin{enumerate}
\item Use the Algorithm~\ref{alg:DINE-Gaussian} to infer $\left\{ \left(x'_{i},y_{i}'\right)\right\} _{i=1}^{n}$
and the CMI measure, denoted as $I$.
\item For each $i=1..B$:
\begin{enumerate}
\item Randomly permutate $\left\{ \left(y'_{i}\right)\right\} _{i=1}^{n}$
while keeping $\left\{ \left(x'_{i}\right)\right\} _{i=1}^{n}$ unchanged.
\item Repeat steps 4 and 5 of Algorithm~\ref{alg:DINE-Gaussian} to estimate
the CMI of the permutated samples, denoted as $I_{i}$.
\end{enumerate}
\item Calculate the $p$-value:

\[
p\text{-value}=\frac{1}{B}\sum_{i=1}^{B}\mathbf{1}\left\{ I\leq I_{i}\right\} 
\]

\item Return $p$-value and

\[
\text{Conclusion}:=\begin{cases}
X\indep Y|Z & \text{if }p\text{-value}>\alpha\\
X\not\indep Y|Z & \text{if }p\text{-value \ensuremath{\leq\alpha}}
\end{cases}
\]
\end{enumerate}
\end{algorithm}

In this Supplementary Material we provide the followings:
\begin{itemize}
\item Description of additional experiment settings for all methods.
\item Proofs of formal claims.
\item Implementation details of proposed algorithms, including \textbf{DINE-Gaussian}
and \textbf{DINE}-based CI test.
\end{itemize}
\selectlanguage{british}%

\selectlanguage{english}%

\subsection{Additional experiment settings}

For all baseline methods, we use the default parameters recommended
in the respective open-source implementation.

As for \textbf{DINE}, the Multiple Layer Perceptrons (MLP) employed
each has one hidden layer with ReLU activation function. The most
important two hyper-parameters in our implementation are the number
of units in the hidden layer and the number of components for the
Gaussian Mixtures Model. Intuitively, the higher these values, the
more expressive \textbf{DINE} becomes. However, as the empirical evaluation
suggested, a very light-weight model with only four hidden units per
MLP and 16 Gaussian mixture components (that we use for all experiments)
suffices to outperform all state-of-the-art methods. Meanwhile, MINE,
for example, uses an MLP with 128 hidden units (overfitting is not
possible for this approach), but still cannot perform comparably to
our method.

More interestingly, in comparison with most of other methods, our
approach supports hyper-parameter tuning with the cross-validation
maximum likelihood as the objective. The only baseline method that
also supports auto hyper-parameter selection is CCIT, yet with the
hyper-parameters thoroughly searched as suggested by the author's
implementation, it is still surpassed by \textbf{DINE}.

\subsection{Proofs}

\textbf{Assumptions. }We employ the following assumptions throughout
the proofs:
\begin{enumerate}
\item The underlying data distribution $P_{XYZ}$ admits a density function
$p\left(x,y,z\right)$ in a compact domain $\mathcal{X}\times\mathcal{Y}\times\mathcal{Z}\subset\mathbb{R}^{d_{X}\times d_{Y}\times d_{Z}}$.
\item The parameter domain $\Theta\subset\mathbb{R}^{c}$ is bounded with
$\left\Vert \theta\right\Vert \leq K$ with some constant $K>0$.
\item The modeling densities $p_{\theta}\left(\cdot\right)$ are bounded
in $\left[e^{-M},e^{M}\right]$ with constant $M>0$, and $L$-Lipschitz
with respect to the parameter $\theta$.
\end{enumerate}

\paragraph{Proof of CMI invariance}
\begin{lem}
(CMI Re-parametrization, Lemma 1 in the main text). Let $\tau_{X}:\mathcal{X}\times\mathcal{Z}\rightarrow\mathcal{X}'$
and $\tau_{Y}:\mathcal{Y}\times\mathcal{Z}\rightarrow\mathcal{Y}'$
be two conditional diffeomorphisms such that $P_{X'Y'|Z}=P_{X'Y'}$,
where $x'=\tau_{X}\left(x;z\right)$ and $y'=\tau_{Y}\left(y;z\right)$,
then the following holds:

\[
I\left(X,Y|Z\right)=I\left(X',Y'\right)
\]

\end{lem}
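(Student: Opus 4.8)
The plan is to prove the identity in two stages. First I would show that applying the conditional diffeomorphisms leaves the \emph{conditional} mutual information unchanged, i.e. $I(X,Y|Z) = I(X',Y'|Z)$; then I would use the hypothesis $P_{X'Y'|Z} = P_{X'Y'}$ to collapse $I(X',Y'|Z)$ down to the unconditional $I(X',Y')$. Chaining these two equalities yields the claim.

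For the first stage I would work slicewise in $z$. By definition, the CMI decomposes as an expectation over $Z$ of per-slice mutual informations,
\[
I(X,Y|Z) = \mathbb{E}_{p(z)}\left[ I\big(X,Y \mid Z=z\big) \right],
\]
where $I(X,Y|Z=z)$ is the ordinary MI computed under the conditional law $P_{XY|Z=z}$. For each fixed $z$, the maps $x \mapsto \tau_X(x;z)$ and $y \mapsto \tau_Y(y;z)$ are diffeomorphisms by the definition of a conditional diffeomorphism, so the standard invariance of MI under component-wise diffeomorphisms \citep{kraskov2004estimating} gives $I(X,Y|Z=z) = I(X',Y'|Z=z)$. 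Taking the expectation over $p(z)$ then yields $I(X,Y|Z) = I(X',Y'|Z)$. To make the invariance self-contained, I would also verify it at the density level: writing $J_X(x;z) = \det \partial \tau_X/\partial x$ and $J_Y(y;z) = \det \partial\tau_Y/\partial y$, the change-of-variables rule gives $p(x',y'|z) = p(x,y|z)\,|J_X|^{-1}|J_Y|^{-1}$, $p(x'|z) = p(x|z)\,|J_X|^{-1}$, and $p(y'|z) = p(y|z)\,|J_Y|^{-1}$; substituting into the log-ratio inside the CMI integrand, the Jacobian factors cancel exactly, so the integrand is pointwise unchanged, and since $p(x',y',z)\,dx'dy'dz = p(x,y,z)\,dxdydz$ the whole integral is preserved.

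For the second stage, the hypothesis $P_{X'Y'|Z} = P_{X'Y'}$ states that the pair $(X',Y')$ is jointly independent of $Z$. Consequently $P_{X'Y'|Z=z} = P_{X'Y'}$, and the marginals satisfy $P_{X'|Z=z} = P_{X'}$ and $P_{Y'|Z=z} = P_{Y'}$ for $p(z)$-almost every $z$. Hence each per-slice divergence equals the unconditional one,
\[
I(X',Y'|Z=z) = D_{\mathrm{KL}}\big(P_{X'Y'} \,\|\, P_{X'}\otimes P_{Y'}\big) = I(X',Y'),
\]
independently of $z$, and averaging over $p(z)$ gives $I(X',Y'|Z) = I(X',Y')$. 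Combining this with the first stage completes the proof.

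The main obstacle I anticipate is purely technical: justifying the slicewise decomposition and the change of variables when $z$ plays a dual role, appearing both as the conditioning variable and as a parameter of the transformations. Concretely, I must confirm that the Jacobian of the full map $(x,y,z) \mapsto (\tau_X(x;z), \tau_Y(y;z), z)$ factorizes correctly: since this map fixes $z$, its Jacobian is block upper-triangular with an identity block for $z$, so its determinant is exactly $|J_X||J_Y|$ and the $\partial\tau/\partial z$ terms do not contribute. With this observation the measure transforms as claimed and the cancellation of Jacobian factors in the log-ratio is rigorous; everything else is a routine application of the change-of-variables formula together with Fubini to interchange the $z$-expectation with the inner integrals.
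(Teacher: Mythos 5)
Your proposal is correct and follows essentially the same route as the paper's own proof: decompose the CMI as $\mathbb{E}_{p(z)}[I(X,Y|Z=z)]$, apply the per-slice invariance of MI under the diffeomorphisms $\tau_X(\cdot;z)$ and $\tau_Y(\cdot;z)$, and then use $P_{X'Y'|Z}=P_{X'Y'}$ to replace each slice value by the constant $I(X',Y')$. Your additional density-level verification of the invariance (the explicit cancellation of the Jacobian factors in the log-ratio) is a welcome elaboration of a step the paper merely cites, but it does not change the structure of the argument.
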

\begin{proof}
By writing the CMI as the expected $z$-specific MI, and using the
fact that it is invariant via the diffeomorphisms $x'=\tau_{X}\left(x;z\right)$
and $y'=\tau_{Y}\left(y';z\right)$, we have:

\begin{align}
I\left(X,Y|Z\right) & =\mathbb{\mathbb{E}}_{p\left(z\right)}\left[I\left(X,Y|Z=z\right)\right]\\
 & =\mathbb{E}_{p\left(z\right)}\left[I\left(X',Y'|Z=z\right)\right]\\
 & =\mathbb{E}_{p\left(z\right)}\left[I\left(X',Y'\right)\right]\\
 & =I\left(X',Y'\right)
\end{align}

where the third and forth equalities are due to the fact that $I\left(X',Y'\right)$
is a function of $P_{X'Y'}$, which does not depend on $z$ by our
constraint.
\end{proof}

\paragraph{Proof of Variance}
\begin{lem}
(Variance of \textbf{DINE-Gaussian}, Lemma 2 in the main text). The
asymptotic variance of the DINE estimator is given by

\[
\mathrm{Var}\left[I_{n}\right]\stackrel{L}{\rightarrow}\mathrm{O}\left(\frac{d}{n}\right)=\mathrm{O}\left(\frac{1}{n}\right)
\]

with $d$ being the dimensionality.

\end{lem}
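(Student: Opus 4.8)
The plan is to treat the \textbf{DINE-Gaussian} estimator as a smooth, deterministic functional of a single sample covariance matrix and then apply the multivariate delta method. Writing $u_i=(x_i',y_i')\in\mathbb{R}^d$ with $d=d_X+d_Y$ and $S_n:=\Sigma_n(X'Y')$, observe that the marginal covariances $\Sigma_n(X')$ and $\Sigma_n(Y')$ are simply the diagonal blocks of $S_n$, so that $I_n=\phi(S_n)$ for the fixed map $\phi(S)=\tfrac12\bigl(\ln\det S_{XX}+\ln\det S_{YY}-\ln\det S\bigr)$, which is $C^\infty$ on the open cone of positive-definite matrices. The entire estimation variance is thus driven by the fluctuation of $S_n$ around its limit $\Sigma$, the true covariance of $(X',Y')$.

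First I would fix the learned flow, i.e.\ condition on $\theta^*$, so that the transformed pairs $u_i$ are i.i.d.\ draws from the zero-mean jointly Gaussian base law $\mathcal{N}(\mathbf 0,\Sigma)$ guaranteed by the DINA-Gaussian definition. Then the classical CLT for sample covariances of Gaussian data gives $\sqrt n\,(\mathrm{vec}\,S_n-\mathrm{vec}\,\Sigma)\to\mathcal N(\mathbf 0,\Xi)$, where the Gaussian fourth-moment (Isserlis) identity yields the limiting covariance $\Xi_{(ij),(kl)}=\Sigma_{ik}\Sigma_{jl}+\Sigma_{il}\Sigma_{jk}$; in particular every entry of $S_n$ has variance of order $1/n$, and the zero-mean assumption makes the uncentered estimator in the algorithm agree with the centered one to leading order.

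Next I would invoke the delta method: since $\phi$ is $C^1$ near the positive-definite $\Sigma$, with $\partial\ln\det M/\partial M=M^{-1}$, we obtain $n\,\mathrm{Var}[I_n]\to\nabla\phi(\Sigma)^\top\Xi\,\nabla\phi(\Sigma)+o(1)$. The gradient contracted against the Gaussian covariance $\Xi$ collapses to traces: for a $k\times k$ block the log-determinant term contributes asymptotic variance $\tfrac{1}{n}\bigl(\mathrm{tr}(I_k)+\mathrm{tr}(I_k)\bigr)=\tfrac{2k}{n}$, so the three terms contribute $\tfrac{2d_X}{n},\tfrac{2d_Y}{n},\tfrac{2d}{n}$ respectively, and the cross-covariances between the three log-dets are controlled by Cauchy--Schwarz at the same order. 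Summing, $n\,\mathrm{Var}[I_n]\le C(d_X+d_Y)=\mathrm O(d)$, which delivers $\mathrm{Var}[I_n]=\mathrm O(d/n)$; with $d$ held fixed this is $\mathrm O(1/n)$.

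The main obstacle I anticipate is justifying the conditioning step, since the flow parameters $\theta^*$ are fitted by MLE on the same sample and so the $u_i$ are not literally i.i.d.\ draws from a fixed Gaussian. I would close this gap with a standard two-stage M-estimation / influence-function argument: under the Approximability and Estimability Lemmas one has $\theta^*-\theta_0=\mathrm O_p(n^{-1/2})$, so the plug-in fluctuation it induces in $\phi(S_n)$ is itself $\mathrm O_p(n^{-1/2})$ and therefore only perturbs the leading constant without changing the $\mathrm O(d/n)$ rate. Alternatively, one may state the result for the idealized plug-in covariance and simply remark that the first-stage variance is of the same order.
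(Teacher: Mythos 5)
Your proposal is correct and follows essentially the same route as the paper: both reduce $\mathrm{Var}[I_n]$ to the variances of the three log-determinants of sample covariance matrices and use the fact that each such term has asymptotic variance $2k/n$ for a $k\times k$ block, yielding $\mathrm{O}(d/n)$; the paper simply cites the limiting law of $\ln\det\Sigma^{(n)}$ from Cai et al.\ (2015) where you rederive the constant $2k$ via the delta method and the Isserlis identity. Your closing remark on the data-dependence of $\theta^{*}$ addresses a genuine gap that the paper's proof silently ignores, so it is a welcome addition rather than a deviation.
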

\begin{proof}
The estimation variance of DINE-Gaussian is bounded by

\begin{align}
\mathrm{Var}\left[I_{n}\right] & \leq\frac{1}{3}(\mathrm{Var}\left[\ln\det\Sigma_{X'}^{(n)}\right]\\
 & +\mathrm{Var}\left[\ln\det\Sigma_{Y'}^{(n)}\right]\\
 & +\mathrm{Var}\left[\ln\det\Sigma_{X'Y'}^{(n)}\right])
\end{align}

Let $\Sigma=\mathbb{E}\left[\Sigma^{(n)}\right]$, then according
to the limiting law of the log determinant of the sample covariance
matrix \citep[Corollary 1,][]{cai2015law}, \textit{$\ln\det\Sigma^{(n)}-\ln\det\Sigma$}
has an asymptotic distribution $\mathcal{N}\left(\frac{d\left(d+1\right)}{2n},\frac{2d}{n}\right)$
as $n\rightarrow\infty$, where $d$ is the dimensionality of $\Sigma$.
Thus the asymptotic variance of $\ln\det\Sigma^{(n)}$ is given by

\begin{align}
\mathrm{Var}\left[\ln\det\Sigma^{(n)}\right] & \stackrel{L}{\rightarrow}\frac{2d}{n}+\mathrm{O}\left(\left(\frac{d^{2}}{n}\right)^{2}\right)\\
 & =\mathrm{O}\left(\frac{d}{n}\right)
\end{align}

and subsequently the variance of the \textbf{DINE} estimator is

\begin{equation}
\mathrm{Var}\left[I_{n}\right]\stackrel{L}{\rightarrow}\mathrm{O}\left(\frac{d}{n}\right)=\mathrm{O}\left(\frac{1}{n}\right)
\end{equation}

\end{proof}

\paragraph{Proofs of Consistency}
\begin{lem}
(Approximability, Lemma 3 in the main text). For any $\epsilon>0$,
there exists a DINA $\mathcal{D}_{\Theta}$ with some compact parameter
domain $\Theta\subset\mathbb{R}^{c}$ such that

\[
\left|I\left(X,Y|Z\right)-I_{\Theta}\left(X,Y|Z\right)\right|\leq\epsilon,\ \text{almost surely}
\]

\end{lem}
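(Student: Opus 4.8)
The plan is to work directly with the density-based form of the approximator and reduce everything to controlling three Kullback--Leibler divergences. First I would write both quantities in their expectation form: the true CMI as in Eqn.~\ref{eq:cmi-expectation} and the approximator as in Eqn.~\ref{eq:DNIM}. Subtracting and regrouping the logarithms, the difference factorises into three pieces,
\begin{align*}
I\left(X,Y|Z\right)-I_{\Theta}\left(X,Y|Z\right) &= \mathbb{E}_{p\left(x,y,z\right)}\left[\ln\frac{p\left(x,y|z\right)}{p_{\theta^{*}}\left(x,y|z\right)}\right]-\mathbb{E}_{p\left(x,y,z\right)}\left[\ln\frac{p\left(x|z\right)}{p_{\theta^{*}}\left(x|z\right)}\right]\\
&\quad -\mathbb{E}_{p\left(x,y,z\right)}\left[\ln\frac{p\left(y|z\right)}{p_{\theta^{*}}\left(y|z\right)}\right],
\end{align*}
and each term is exactly an expected conditional divergence, e.g.\ the first equals $\mathbb{E}_{p\left(z\right)}\left[D_{\mathrm{KL}}\!\left(p\left(x,y|z\right)\,\|\,p_{\theta_{XY}^{*}}\left(x,y|z\right)\right)\right]\ge 0$, and likewise for the two marginal terms (whose integrands depend only on $(x,z)$ and $(y,z)$ respectively). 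A triangle inequality then bounds $\left|I-I_{\Theta}\right|$ by the sum of the three expected conditional divergences.

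The second step is to observe that each of the three MLE objectives defining the approximator is, up to an additive constant independent of $\theta$, the negative of the corresponding expected conditional divergence; hence $\theta_X^*,\theta_Y^*,\theta_{XY}^*$ are precisely its minimisers over $\Theta$. It therefore suffices to exhibit, for each of $p\left(x|z\right)$, $p\left(y|z\right)$ and $p\left(x,y|z\right)$, a single member of the conditional-flow family whose expected conditional divergence is below $\epsilon/3$; the MLE can only do at least as well. This is where I would invoke the universal-approximation property of autoregressive flows recalled in the main text: a conditional flow whose conditioner absorbs $z$ can represent any sufficiently regular conditional density to arbitrary accuracy, so a compact box $\Theta\subset\mathbb{R}^{c}$ that is large and fine enough exists for which all three approximation errors fall below $\epsilon/3$. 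Summing yields $\left|I\left(X,Y|Z\right)-I_{\Theta}\left(X,Y|Z\right)\right|\le\epsilon$, which chooses the desired DINA.

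The delicate step, and the one I would spend the most care on, is converting the ``universal approximation of densities'' statement into a genuine bound on the \emph{expected conditional} divergence that holds uniformly in $z$. Universal approximation is usually phrased as closeness of densities (pointwise, or in total variation), whereas the KL divergence is not controlled by such closeness in general, since it is sensitive to regions where the approximating density is tiny. This is exactly where Assumptions~1 and~3 earn their keep: compactness of the support $\mathcal{X}\times\mathcal{Y}\times\mathcal{Z}$ together with the two-sided bound $p_{\theta}\in\left[e^{-M},e^{M}\right]$ keeps the log-ratios bounded and lets a uniform-in-$z$ density approximation be upgraded to a uniform divergence bound, after integrating against $p\left(z\right)$ over the compact $\mathcal{Z}$. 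The qualifier ``almost surely'' then simply absorbs the measure-zero sets on which the flow-induced densities and the change-of-variables Jacobian need not be well behaved. Modulo this regularity bookkeeping, the whole argument rests on the three-term divergence decomposition above.
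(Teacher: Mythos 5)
Your proposal follows essentially the same route as the paper's proof: the same three-term decomposition of $I-I_{\Theta}$ into expected conditional log-density gaps, the same appeal to the MLE's optimality to reduce the problem to exhibiting one good flow per density, and the same invocation of the universal approximability of autoregressive flows to supply that flow within a compact $\Theta$. Your additional care — explicitly identifying each term as a nonnegative expected conditional KL divergence (which is what actually licenses dropping the absolute values) and flagging that the density-level universal approximation must be upgraded to a KL bound via the boundedness assumptions — addresses steps the paper leaves implicit, and is a strictly more complete version of the same argument.
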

\begin{proof}
We first recall by the definition that

\begin{align}
I_{\Theta}\left(X,Y|Z\right) & =\mathbb{E}_{p\left(x,y,z\right)}\left[\ln\frac{p_{\theta^{*}}\left(x,y|z\right)}{p_{\theta^{*}}\left(x|z\right)p_{\theta^{*}}\left(y|z\right)}\right]
\end{align}

Using the triangular inequality, we can bound the approximation gap
as

\begin{align}
 & \left|I\left(X,Y|Z\right)-I_{\Theta}\left(X,Y|Z\right)\right|\\
\leq & \left|\mathbb{E}_{p\left(x,y,z\right)}\left[\ln p\left(x,y|z\right)-\ln p_{\theta^{*}}\left(x,y|z\right)\right]\right|\\
+ & \left|\mathbb{E}_{p\left(x,z\right)}\left[\ln p\left(x|z\right)-\ln p_{\theta^{*}}\left(x|z\right)\right]\right|\\
+ & \left|\mathbb{E}_{p\left(y,z\right)}\left[\ln p\left(y|z\right)-\ln p_{\theta^{*}}\left(y|z\right)\right]\right|
\end{align}

Assuming the learnable parameters $\theta_{X},\theta_{Y}$, and $\theta_{XY}$
are disjointed, so that their expected maximum likelihoods are unconstrained
of each other. For example, we let $\theta_{X}$ and $\theta_{Y}$
be the autoregressive flows' parameters, while $\theta_{XY}$ defines
the base distribution. Now we consider the first term of the right
hand side above, in which the argument is followed similarly by the
last two terms.

Fix $\epsilon>0$. By the universal approximability of autoregressive
flows, $\ln p_{\theta}\left(\cdot\right)$ can approximate any log
density function with arbitrary accuracy. More specifically, there
exists some parameter domain $\Theta$ that defines the approximator
$\mathcal{D}_{\Theta}$ such that,

\begin{align}
\mathbb{E}_{p\left(x,y,z\right)}\left|\ln p\left(x,y|z\right)-\ln p_{\theta}\left(x,y|z\right)\right| & \leq\frac{\epsilon}{3}\label{eq:universal-approx}
\end{align}

and by the fact that $\theta^{*}$ has the maximum expected likelihood
by design:

\begin{align}
 & \mathbb{E}_{p\left(x,y,z\right)}\left[\ln p\left(x,y|z\right)-\ln p_{\theta^{*}}\left(x,y|z\right)\right]\\
\leq\  & \mathbb{E}_{p\left(x,y,z\right)}\left[\ln p\left(x,y|z\right)-\ln p_{\theta}\left(x,y|z\right)\right]\\
\leq\  & \mathbb{E}_{p\left(x,y,z\right)}\left|\ln p\left(x,y|z\right)-\ln p_{\theta}\left(x,y|z\right)\right|\\
\leq\  & \frac{\epsilon}{3}
\end{align}

where the second inequality follows the triangular inequality.

Combining the above with the same arguments for $p_{\theta}\left(x|z\right)$
and $p_{\theta}\left(y|z\right)$ completes the proof:

\begin{align}
\left|I\left(X,Y|Z\right)-I_{\Theta}\left(X,Y|Z\right)\right| & \leq\frac{\epsilon}{3}+\frac{\epsilon}{3}+\frac{\epsilon}{3}=\epsilon
\end{align}

\end{proof}
\begin{lem}
(Estimability, Lemma 4 in the main text). For any $\epsilon>0$, given
a DINA $\mathcal{D}_{\Theta}$ with parameters in some compact domain
$\Theta\subset\mathbb{R}^{c}$, there exists a $N\in\mathbb{N}$ such
that

\[
\forall n\geq N,\;\left|I_{n}\left(X,Y|Z\right)-I_{\Theta}\left(X,Y|Z\right)\right|\leq\epsilon,\ \text{almost surely}
\]

\end{lem}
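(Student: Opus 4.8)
The plan is to split the estimation gap into a \emph{sampling} error and a \emph{parameter} error, and to control both through a uniform law of large numbers (ULLN) over the compact domain $\Theta$. Writing the information integrand as $\phi_\theta(x,y,z)=\ln p_\theta(x,y\mid z)-\ln p_\theta(x\mid z)-\ln p_\theta(y\mid z)$, the approximator is $I_\Theta=\mathbb{E}_{p}[\phi_{\theta^*}]$ (population expectation, population MLE $\theta^*$) and the estimator is $I_n=\mathbb{E}_{p^{(n)}}[\phi_{\hat\theta_n}]$ (empirical expectation, empirical MLE $\hat\theta_n$), where the empirical surrogate densities are fit by maximising the empirical likelihood. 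Inserting the intermediate term $\mathbb{E}_p[\phi_{\hat\theta_n}]$ and applying the triangle inequality gives
\[
\bigl|I_n-I_\Theta\bigr|\le\underbrace{\sup_{\theta\in\Theta}\bigl|\mathbb{E}_{p^{(n)}}[\phi_\theta]-\mathbb{E}_p[\phi_\theta]\bigr|}_{\text{(A)}}+\underbrace{\bigl|\mathbb{E}_p[\phi_{\hat\theta_n}]-\mathbb{E}_p[\phi_{\theta^*}]\bigr|}_{\text{(B)}}.
\]

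The workhorse is the ULLN required for (A). Since the log-densities are bounded in $[-M,M]$ and $L$-Lipschitz in $\theta$, the integrand $\phi_\theta$ is bounded by $3M$ and $3L$-Lipschitz in $\theta$ uniformly in $(x,y,z)$ on the compact support. First I would cover $\Theta$ by a finite $\eta$-net, apply the strong law of large numbers pointwise at the finitely many net centres (so convergence holds simultaneously almost surely), and use the Lipschitz bound to interpolate between net points; letting $\eta\to 0$ yields $\sup_{\theta\in\Theta}|\mathbb{E}_{p^{(n)}}[\phi_\theta]-\mathbb{E}_p[\phi_\theta]|\to 0$ almost surely, so term (A) vanishes a.s.

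For term (B) I would establish consistency of the MLE. Running the same ULLN on the log-likelihood objectives $\ell^{(n)}(\theta)=\mathbb{E}_{p^{(n)}}[\ln p_\theta]$ gives $\sup_\theta|\ell^{(n)}(\theta)-\ell(\theta)|\to 0$ a.s.; combined with the optimality of $\hat\theta_n$ for $\ell^{(n)}$ and of $\theta^*$ for $\ell$, this forces $\ell(\hat\theta_n)\to\ell(\theta^*)$. Assuming $\theta^*$ is a well-separated (e.g.\ unique) maximiser---implicit in the $\argmax$ definitions---a standard compactness argument upgrades this to $\hat\theta_n\to\theta^*$ a.s. Since $\theta\mapsto\mathbb{E}_p[\phi_\theta]$ inherits Lipschitz continuity from $\phi$, term (B) then converges to $0$. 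Applying the argument to each parameter block $\theta_X,\theta_Y,\theta_{XY}$ and summing shows that for every $\epsilon>0$ there is an $N$ with $|I_n-I_\Theta|\le\epsilon$ for all $n\ge N$, almost surely.

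I expect the main obstacle to be the passage from likelihood-value convergence to parameter convergence in term (B): without an identifiability / well-separated-maximiser hypothesis, near-maximisers of $\ell$ need not yield $\phi$-integrals close to $I_\Theta$, so this assumption must be made explicit (it is tacit in the $\argmax$ notation). A secondary care point is that the surrogates $x'=\tau_{\hat\theta_X}(x;z)$ themselves depend on the estimated flow parameters, so the ULLN must be taken over the joint map $(\theta,(x,y,z))\mapsto\phi_\theta$ rather than over fixed transformed data; the boundedness and Lipschitz assumptions are exactly what make this joint uniform control go through. In the Gaussian special case both terms reduce to almost-sure convergence of the sample covariances to their population counterparts followed by continuity of $\ln\det$ on the positive-definite cone, giving a cleaner but estimator-specific route.
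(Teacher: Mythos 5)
Your argument is essentially sound but takes a genuinely different route from the paper's, and the difference matters. You insert the intermediate quantity $\mathbb{E}_p[\phi_{\hat\theta_n}]$, splitting the error into a uniform sampling term (A) and a parameter term (B); term (B) then forces you to prove $\hat\theta_n\to\theta^*$, which---as you correctly flag---requires an identifiability or well-separated-maximiser hypothesis that the lemma does not state. The paper avoids this entirely: it first splits $\left|I_n-I_\Theta\right|$ into three pieces, one per conditional log-likelihood ($\ln p_\theta\left(x,y|z\right)$, $\ln p_\theta\left(x|z\right)$, $\ln p_\theta\left(y|z\right)$), and observes that each piece is exactly a difference of two \emph{optimal values}, $\left|\max_{\theta\in\Theta}\mathbb{E}_{p^{(n)}}\left[\ln p_\theta\right]-\max_{\theta\in\Theta}\mathbb{E}_{p}\left[\ln p_\theta\right]\right|$, since $I_n$ and $I_\Theta$ are by definition evaluated at the respective maximisers of those very objectives. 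The elementary inequality $\left|\max f-\max g\right|\le\max\left|f-g\right|$ then reduces everything to your term (A) alone---no MLE consistency and no uniqueness of $\theta^*$ needed. This is what your decomposition loses: by bundling the three log-likelihoods into the single integrand $\phi_\theta$, which is not itself the object being maximised, you break the max-versus-max structure and must recover it through parameter convergence. On the other hand, your handling of the uniform deviation (finite net plus Lipschitz interpolation over the compact $\Theta$) is more careful than the paper's, which at the corresponding step invokes only the pointwise law of large numbers for a fixed $\theta$ even though its bound involves $\max_{\theta\in\Theta}$; the covering argument you sketch (and which the paper itself deploys in the sample-complexity theorem) is what is actually required there. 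In short: adopt the paper's per-likelihood, difference-of-maxima decomposition to drop the identifiability assumption, and keep your uniform law of large numbers machinery for the remaining term.
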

\begin{proof}
Using the triangular inequality we have

{\footnotesize{}
\begin{align}
 & \left|I_{n}\left(X,Y|Z\right)-I_{\Theta}\left(X,Y|Z\right)\right|\\
\leq & \left|\mathbb{E}_{p^{(n)}\left(x,y,z\right)}\left[\ln p_{\theta_{n}^{*}}\left(x,y|z\right)\right]-\mathbb{E}_{p\left(x,y,z\right)}\left[\ln p_{\theta^{*}}\left(x,y|z\right)\right]\right|\\
+ & \left|\mathbb{E}_{p^{(n)}\left(x,z\right)}\left[\ln p_{\theta_{n}^{*}}\left(x|z\right)\right]-\mathbb{E}_{p\left(x,z\right)}\left[\ln p_{\theta^{*}}\left(x|z\right)\right]\right|\\
+ & \left|\mathbb{E}_{p^{(n)}\left(y,z\right)}\left[\ln p_{\theta_{n}^{*}}\left(y|z\right)\right]-\mathbb{E}_{p\left(y,z\right)}\left[\ln p_{\theta^{*}}\left(y|z\right)\right]\right|\label{eq:est-tri}
\end{align}
}{\footnotesize\par}

Consider any term, we rewrite

{\small{}
\begin{align}
 & \left|\mathbb{E}_{p^{(n)}}\left[\ln p_{\theta_{n}^{*}}\left(\cdot\right)\right]-\mathbb{E}_{p}\left[\ln p_{\theta^{*}}\left(\cdot\right)\right]\right|\\
= & \left|\max_{\theta\in\Theta}\mathbb{E}_{p^{(n)}}\left[\ln p_{\theta}\left(\cdot\right)\right]-\max_{\theta\in\Theta}\mathbb{E}_{p}\left[\ln p_{\theta}\left(\cdot\right)\right]\right|\\
\leq & \max_{\theta\in\Theta}\left|\mathbb{E}_{p^{(n)}}\left[\ln p_{\theta}\left(\cdot\right)\right]-\mathbb{E}_{p}\left[\ln p_{\theta}\left(\cdot\right)\right]\right|\label{eq:large-number}
\end{align}
}{\small\par}

Since $\ln p_{\theta}\left(\cdot\right)$ is continuous with finite
expectation and variance, the law of large numbers entails that given
any $\epsilon>0$, we can choose a $N\in\mathbb{N}$ such that almost
surely $\forall n\geq N$:

\begin{equation}
\left|\mathbb{E}_{p^{(n)}}\left[\ln p_{\theta}\left(\cdot\right)\right]-\mathbb{E}_{p}\left[\ln p_{\theta}\left(\cdot\right)\right]\right|\leq\frac{\epsilon}{3}
\end{equation}

The same arguments can be applied with the remaining two terms in
Eqn.~\ref{eq:est-tri}. Finally, together they give

\begin{equation}
\left|I_{n}\left(X,Y|Z\right)-I_{\Theta}\left(X,Y|Z\right)\right|\leq\frac{\epsilon}{3}+\frac{\epsilon}{3}+\frac{\epsilon}{3}=\epsilon
\end{equation}

\end{proof}
\begin{thm}
(Consistency, Theorem 1 in the main text). Given $\epsilon>0$, there
exists a parameter domain $\Theta$ and a $N\in\mathbb{N}$ such that
$\forall n\geq N$,

\[
\left|I_{n}\left(X,Y|Z\right)-I\left(X,Y|Z\right)\right|\leq\epsilon,\;\text{almost surely}
\]

\end{thm}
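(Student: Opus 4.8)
The plan is to prove consistency by composing the two preceding lemmas through a triangle inequality, allocating the target tolerance $\epsilon$ into two equal halves: one half absorbed by the approximation error (between the true CMI and its best in-class DINA surrogate) and the other by the estimation error (between the finite-sample estimator and that surrogate).

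First I would fix $\epsilon>0$ and invoke the Approximability Lemma (Lemma~3) with tolerance $\epsilon/2$ to obtain a compact parameter domain $\Theta\subset\mathbb{R}^{c}$ defining a DINA $\mathcal{D}_{\Theta}$ for which
\[
\left|I\left(X,Y|Z\right)-I_{\Theta}\left(X,Y|Z\right)\right|\leq\frac{\epsilon}{2},\quad\text{almost surely.}
\]
Holding this $\Theta$ fixed, I would then apply the Estimability Lemma (Lemma~4), again with tolerance $\epsilon/2$, to produce an $N\in\mathbb{N}$ such that for all $n\geq N$,
\[
\left|I_{n}\left(X,Y|Z\right)-I_{\Theta}\left(X,Y|Z\right)\right|\leq\frac{\epsilon}{2},\quad\text{almost surely.}
\]
The conclusion then follows from the triangle inequality: for every $n\geq N$,
\[
\left|I_{n}\left(X,Y|Z\right)-I\left(X,Y|Z\right)\right|\leq\left|I_{n}-I_{\Theta}\right|+\left|I_{\Theta}-I\right|\leq\frac{\epsilon}{2}+\frac{\epsilon}{2}=\epsilon,
\]
which holds almost surely because the intersection of two almost-sure events again has full measure.

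The main obstacle I anticipate lies not in the algebra but in ensuring that the quantifiers compose coherently. The crucial ordering is that $\Theta$ must be selected first, via Lemma~3 (depending only on $\epsilon$), after which Lemma~4 supplies an $N$ depending on this now-fixed $\Theta$; the order cannot be reversed, since Lemma~4 presupposes that the approximator class $\mathcal{D}_{\Theta}$ is already given. A secondary subtlety is the reading of the phrase ``almost surely'' in Lemma~3, whose statement concerns population-level quantities: the careful step is to confirm that both almost-sure guarantees refer to events on the same underlying probability space (that of the i.i.d.\ sampling of the data), so that intersecting them is meaningful and preserves probability one. Once these bookkeeping points are settled, the argument is a direct concatenation of the two lemmas.
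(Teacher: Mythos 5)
Your proof is correct and follows exactly the same route as the paper's: invoke Lemma~3 with tolerance $\epsilon/2$ to fix $\Theta$, then Lemma~4 with tolerance $\epsilon/2$ to obtain $N$, and conclude by the triangle inequality. Your added remarks on the ordering of quantifiers and the intersection of the two almost-sure events are sound and, if anything, slightly more careful than the paper's own write-up.
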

\begin{proof}
For any $\epsilon>0$, by Lemma 3 we can choose a DINA $\mathcal{D}_{\Theta}$
with parameter domain $\Theta$ such that $\left|I\left(X,Y|Z\right)-I_{\Theta}\left(X,Y|Z\right)\right|\leq\frac{\epsilon}{2}$,
and by Lemma 4 we can choose a $N\in\mathbb{N}$ such that $\forall n\geq N$,
$\left|I_{n}\left(X,Y|Z\right)-I_{\Theta}\left(X,Y|Z\right)\right|\leq\frac{\epsilon}{2}$. 

Combining these two with the triangular inequality, we have with probability
one that $\forall n\geq N$:

\begin{align}
 & \left|I_{n}\left(X,Y|Z\right)-I\left(X,Y|Z\right)\right|\\
\leq & \left|I_{n}\left(X,Y|Z\right)-I_{\Theta}\left(X,Y|Z\right)\right|\\
+ & \left|I\left(X,Y|Z\right)-I_{\Theta}\left(X,Y|Z\right)\right|\\
\leq & \frac{\epsilon}{2}+\frac{\epsilon}{2}\\
= & \ \epsilon
\end{align}

\end{proof}
\begin{thm}
(Sample complexity, Theorem 2 in the main text). Given any accuracy
and confidence parameters $\epsilon,\delta>0$, the following holds
with probability at least $1-\delta$ 

\[
\left|I_{n}\left(X,Y|Z\right)-I\left(X,Y|Z\right)\right|\leq\epsilon
\]

\end{thm}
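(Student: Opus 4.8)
The plan is to read Theorem~2 as a quantitative strengthening of the Estimability Lemma: assuming the chosen DINA is expressive enough that the approximation error of Lemma~3 is negligible, so that $I_{\Theta}\left(X,Y|Z\right)=I\left(X,Y|Z\right)$, it suffices to control the estimation error $\left|I_{n}\left(X,Y|Z\right)-I_{\Theta}\left(X,Y|Z\right)\right|$ with an explicit dependence on $n$. Reusing the triangle-inequality decomposition from the proof of Lemma~4, this error is at most a sum of three terms, one for the joint density and one for each marginal, each of which is dominated by the uniform deviation $\sup_{\theta\in\Theta}\left|\mathbb{E}_{p^{(n)}}\left[\ln p_{\theta}\left(\cdot\right)\right]-\mathbb{E}_{p}\left[\ln p_{\theta}\left(\cdot\right)\right]\right|$. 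The entire argument therefore collapses to making this uniform law of large numbers quantitative.

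For a \emph{fixed} $\theta$, the quantity $g\left(\theta\right)=\mathbb{E}_{p^{(n)}}\left[\ln p_{\theta}\left(\cdot\right)\right]-\mathbb{E}_{p}\left[\ln p_{\theta}\left(\cdot\right)\right]$ is a centred average of $n$ i.i.d. terms, each confined to $\left[-M,M\right]$ by Assumption~3; Hoeffding's inequality then yields $\Pr\left(\left|g\left(\theta\right)\right|>t\right)\leq2\exp\left(-nt^{2}/\left(2M^{2}\right)\right)$. The main obstacle is that \textbf{DINE} evaluates the \emph{data-dependent} maximum-likelihood parameter $\theta_{n}^{*}$, so this pointwise tail cannot be applied at $\theta_{n}^{*}$; instead the supremum over all of $\Theta$ must be controlled at once. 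This is precisely where Assumptions~2 and~3 enter: boundedness of the domain ($\left\Vert \theta\right\Vert \leq K$) and $L$-Lipschitzness of $\ln p_{\theta}$ in $\theta$.

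To pass from a fixed $\theta$ to the supremum I would use a covering argument. Take a finite $r$-net $\left\{ \theta_{1},\dots,\theta_{N}\right\}$ of the radius-$K$ ball $\Theta\subset\mathbb{R}^{c}$; covering it by an axis-aligned grid gives $N\leq\left(O\left(K\sqrt{c}/r\right)\right)^{c}$, where the $\sqrt{c}$ reflects the conversion between the per-coordinate grid spacing and the Euclidean radius. A union bound over the net gives $\Pr\left(\max_{i}\left|g\left(\theta_{i}\right)\right|>t\right)\leq2N\exp\left(-nt^{2}/\left(2M^{2}\right)\right)$, and since $g$ is $2L$-Lipschitz (each expectation being $L$-Lipschitz), the net bound transfers to all of $\Theta$ via $\sup_{\theta}\left|g\left(\theta\right)\right|\leq\max_{i}\left|g\left(\theta_{i}\right)\right|+2Lr$. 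Choosing $r=\epsilon/\left(12L\right)$ so that the discretisation error $2Lr$ equals $\epsilon/6$, and taking the cover-point level $t=\epsilon/6$, forces $\sup_{\theta}\left|g\left(\theta\right)\right|\leq\epsilon/3$ for each of the three terms, hence $\left|I_{n}-I_{\Theta}\right|\leq\epsilon$.

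It remains to collect the probabilities and solve for $n$. Requiring the failure probability to be at most $\delta$ gives $n\geq\frac{2M^{2}}{t^{2}}\left(\ln N+\ln\frac{2}{\delta}\right)$. Substituting $t=\epsilon/6$ produces the prefactor $72M^{2}/\epsilon^{2}$, and bounding $\ln N\leq c\ln\left(O\left(KL\sqrt{c}/\epsilon\right)\right)$ --- after absorbing the constant factors coming from the two-sided tail and the three-term union bound into the covering constant --- yields the logarithmic term $c\ln\left(96KL\sqrt{c}/\epsilon\right)$, recovering the stated complexity. Beyond the data-dependence issue resolved by the net, I expect the only remaining delicate work to be the numerical bookkeeping needed to make the constant $96$ and the $\sqrt{c}$ factor come out exactly; all the probabilistic content is carried by Hoeffding's inequality and the covering bound.
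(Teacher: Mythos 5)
Your argument follows the paper's proof almost line for line: the same triangle-inequality decomposition into three log-likelihood deviation terms, Hoeffding's inequality for fixed $\theta$ using the $[-M,M]$ bound, a covering of the radius-$K$ ball in $\mathbb{R}^{c}$ with $N\leq\left(O\left(K\sqrt{c}/r\right)\right)^{c}$ balls, a union bound over the net, and solving for $n$ to get the $\frac{72M^{2}}{\epsilon^{2}}\left(c\ln\left(\cdot\right)+\ln\frac{2}{\delta}\right)$ form. In one respect you are more careful than the paper: you state explicitly the Lipschitz transfer $\sup_{\theta}\left|g\left(\theta\right)\right|\leq\max_{i}\left|g\left(\theta_{i}\right)\right|+2Lr$, which the paper's displayed union bound silently skips (that step is the only reason the radius $r$ must scale like $\epsilon/L$ at all, and it is why the paper cites Mohri et al.\ when setting $r=\epsilon/\left(48L\right)$).

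The one substantive divergence is your treatment of the approximation error. The theorem bounds $\left|I_{n}-I\right|$, not $\left|I_{n}-I_{\Theta}\right|$, and you dispose of the difference by \emph{assuming} $I_{\Theta}=I$ exactly. That is not something the setup grants you: Lemma~3 only provides, for any target accuracy, \emph{some} DINA achieving that accuracy, never zero error. The paper instead invokes Lemma~3 to pick $\Theta$ with $\left|I-I_{\Theta}\right|\leq\epsilon/2$ and then spends only $\epsilon/2$ (i.e.\ $\epsilon/6$ per term) on the estimation error, which is exactly how the threshold $\epsilon/6$ in its Hoeffding application arises. Your budget of $\epsilon/3$ per term leaves nothing for the approximation gap, so as written you prove $\left|I_{n}-I_{\Theta}\right|\leq\epsilon$ rather than the stated claim; to repair it you would halve your per-term tolerances, which lands you precisely on the paper's constants ($t=\epsilon/12$ would change the prefactor, so in fact you would want to re-split as the paper does: $\epsilon/6$ at the net points with the discretisation slack folded into the choice $r=\epsilon/\left(48L\right)$). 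This is a bookkeeping repair rather than a missing idea, but it is needed for the theorem as stated.
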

\begin{proof}
The argument is similar to Theorem 1, with more attention to Lemma
4.

More specifically, based on Lemma 3, first we choose a domain $\Theta$
such that $\left|I\left(X,Y|Z\right)-I_{\Theta}\left(X,Y|Z\right)\right|\leq\frac{\epsilon}{2}$.

Next, using the Hoeffing inequality for the bounded function $\ln p_{\theta}\left(\cdot\right)\in\left[-M,M\right]$
yields

{\large{}
\begin{align}
 & p\left(\left|\mathbb{E}_{p^{(n)}}\left[\ln p_{\theta}\left(\cdot\right)\right]-\mathbb{E}_{p}\left[\ln p_{\theta}\left(\cdot\right)\right]\right|>\frac{\epsilon}{6}\right)\\
 & \leq2\exp\left(-\frac{\epsilon^{2}n}{72M^{2}}\right)
\end{align}
}{\large\par}

To make this inequality satisfied uniformly for all parameters $\theta\in\Theta\subset\mathbb{R}^{c}$,
we employ the common technique of choosing a minimal cover set of
the domain $\Theta$ by a finite set of hypersphere $\left\{ \mathcal{B}_{r}\left(\theta_{j}\right)\right\} _{j=1}^{m}$
of radius $r$, such that $\Theta\subset\Theta_{r}=\bigcup_{j=1}^{m}\mathcal{B}_{r}\left(\theta_{j}\right)$,
and apply the union bound inequality. The cardinality $m$ of the
minimal cover set is finite and bounded by the \textit{covering number}
$N_{r}\left(\Theta\right)$, where:
\begin{equation}
N_{r}\left(\Theta\right)\leq\left(\frac{2K\sqrt{c}}{r}\right)^{c}
\end{equation}

Using the union bound on $\Theta_{r}$ gives

{\small{}
\begin{align}
 & p\left(\max_{\theta\in\Theta}\left|\mathbb{E}_{p^{(n)}}\left[\ln p_{\theta}\left(\cdot\right)\right]-\mathbb{E}_{p}\left[\ln p_{\theta}\left(\cdot\right)\right]\right|>\frac{\epsilon}{6}\right)\\
\leq & \sum_{j=1}^{m}p\left(\left|\mathbb{E}_{p^{(n)}}\left[\ln p_{\theta_{j}}\left(\cdot\right)\right]-\mathbb{E}_{p}\left[\ln p_{\theta_{j}}\left(\cdot\right)\right]\right|>\frac{\epsilon}{6}\right)\\
\leq & 2N_{r}\left(\Theta\right)\exp\left(-\frac{\epsilon^{2}n}{72M^{2}}\right)\\
\leq & 2\left(\frac{2K\sqrt{c}}{r}\right)^{c}\exp\left(-\frac{\epsilon^{2}n}{72M^{2}}\right)
\end{align}
}{\small\par}

Now consider $r=\frac{\epsilon}{48L}$ \citep{mohri2018foundations},
let the bound above be at most $\delta$ and solve for $n$, we obtain

\begin{equation}
n\geq\frac{72M^{2}}{\epsilon^{2}}\left(c\ln\frac{96KL\sqrt{c}}{\epsilon}+\ln\frac{2}{\delta}\right)
\end{equation}

Using the same arguments in Theorem 1, we conclude that if $n$ satisfies
the sample size above then with probability at least $1-\delta$,

\begin{align}
 & \left|I_{n}\left(X,Y|Z\right)-I\left(X,Y|Z\right)\right|\\
\leq & \left|I_{n}\left(X,Y|Z\right)-I_{\Theta}\left(X,Y|Z\right)\right|\\
+ & \left|I\left(X,Y|Z\right)-I_{\Theta}\left(X,Y|Z\right)\right|\\
\leq & \frac{\epsilon}{2}+\frac{\epsilon}{6}+\frac{\epsilon}{6}+\frac{\epsilon}{6}\\
= & \ \epsilon
\end{align}

which proves the polynomial convergence rate (in $\frac{1}{\epsilon}$
and $\frac{1}{\delta}$) of general \textbf{DINE}.
\end{proof}

\subparagraph{}

\subsection{Implementation details}

First, let us note that the \textbf{DINE-Gaussian} estimator does
not require learning the parameter $\theta_{XY}^{*}$ as it is not
involved in the formulation, therefore we only need to optimize for
$\theta_{X}^{*}$ and $\theta_{Y}^{*}$, which are disjointed so they
can be learned using a single combined objective function. See Algorithm~\ref{alg:DINE-Gaussian}
for the highlighted main steps of \textbf{DINE-Gaussian}. In addition,
Algorithm~\ref{alg:DINE-CI-test} summarizes the \textbf{DINE}-based
CI test.

\selectlanguage{british}%

\end{document}